\newcommand{\norm}[1]{\lVert#1\rVert}
\newcommand{\inner}[1]{\langle#1\rangle}
\newtheorem{theo}{Theorem}
\newtheorem{lem}[theo]{Lemma}
\newtheorem{remark}[theo]{Remark}
\title{High-dimensional classification problems with Barron regular boundaries under margin conditions}
\author{Jonathan Garc\'ia \footnote{Faculty of Mathematics, University of Vienna, Austria. \\ \texttt{jonathan.garcia.rebellon@univie.ac.at }} \and 
	Philipp Petersen  \footnote{Faculty of Mathematics and Research Network Data Science @ Uni Vienna, University of Vienna, Austria. \\ \texttt{philipp.petersen@univie.ac.at}}}
\begin{document}
	\emergencystretch 3em
	\maketitle
	\begin{abstract}
		We prove that a classifier with a Barron-regular decision boundary can be approximated with a rate of high polynomial degree by ReLU neural networks with three hidden layers when a margin condition is assumed. In particular, for strong margin conditions, high-dimensional discontinuous classifiers can be approximated with a rate that is typically only achievable when approximating a low-dimensional smooth function. We demonstrate how these expression rate bounds imply fast-rate learning bounds that are close to $n^{-1}$ where $n$ is the number of samples.
		In addition, we carry out comprehensive numerical experimentation on binary classification problems with various margins. We study three different dimensions, with the highest dimensional problem corresponding to images from the MNIST data set.
	\end{abstract}
	
	\medskip
	
	\noindent
	\textbf{Keywords:} Neural Networks, Binary classification, Fourier-analytic Barron space, Barron regular boundaries, Margin condition, Hinge loss.
	
	\noindent
	\textbf{Mathematics Subject Classification:} 
	68T05,  
	62C20,  
	41A25,  
	41A46. 
	
	\medskip
	
	
\section{Introduction}

Binary classification is one of the oldest problems in machine learning, and it provides a foundation for understanding multiclass classification \cite{Bishop, Devroye, Trevor}. This problem can be reduced to interpreting whether an element $ \bm{x} $ is inside a set $ \Omega $ or not, and this can be represented mathematically by an indicator function $ \mathbbm{1}_{\Omega}(\bm{x}) $. Moreover, we define $ \Omega $ as 
\begin{equation}\label{posterandomeg}
	\Omega:=\left\{\bm{x}\in [0,1]^{d}: \eta(\bm{x})\geq 1/2\right\}~\text{ where }~ d\in\mathbb{N}_{\geq 2}, ~ \eta(\bm{x}):=\mu\left(Y=1|X=\bm{x}\right) 
\end{equation}
is the posterior probability, and denote the decision boundary of $ \Omega $  as $ \partial\Omega $. Our goal is to approximate $ \mathbbm{1}_{\Omega} $ using neural networks, which is a complex problem since  $ \mathbbm{1}_{\Omega} $ is a discontinuous function, can be very high dimensional, and it is known that approximation of high dimensional discontinuous functions is a serious problem in general \cite{Donoho}. It has also been shown that under classical smoothness assumptions on the decision boundaries $\partial\Omega$ the classification problem admits a curse of dimension \cite{PETERSEN2018296}.

Probably, the most common classification problem is image classification. 
In this case the dimension depends on the resolution of the image, i.e. the number of pixels used to create it. Typically this leads to a very high dimensional problem, some well known sample sets to train classification models are for example MNIST with images of $ 28\times28 $ pixels or ImageNet with images of $ 256\times256 $ pixels.

A classification method that uses a notion of distance to build the decision functions is expected to learn more easily in regions that are not close to the decision boundary. This suggest that learning should be relatively simple for distributions that do not have a lot of mass in the vicinity of the decision boundary. In order to control the mass in the vicinity of the decision boundary the following margin condition~\ref{(M)cond} is defined \cite{SVM}. Let $ \mu $ be a Borel probability measure on $ [0,1]^{d} $ and $ \mathcal{D} $ a joint distribution induced by $ \mu $ of an i.i.d.~sample set 
\begin{equation}
	S=\{(\bm{x}_{i},y_{i})\}_{i=1}^{n},\quad \text{where}\quad  (\bm{x}_{i},y_{i})\in [0,1]^{d}\times \{0,1\}\quad \text{and}\quad \mathbbm{1}_{\Omega}(\bm{x}_{i})=y_{i},\label{notsample}
\end{equation}
for all $ i\in \{1,\ldots,n\} $. Then, the margin condition is as follows.
\begin{enumerate}[label=(M)]
	\item\label{(M)cond} There exist $ C,\gamma>0 $, such that for all $ \epsilon>0 $, 
	\begin{equation*}
		\mu\left(B_{\epsilon}^{*}\right)\leq C\epsilon^{\gamma}\quad \text{where}\quad B_{\epsilon}^{*}:= \left\{\bm{x}\in[0,1]^{d}:\mathrm{dist}\left(\bm{x},\partial\Omega\right)\leq \epsilon\right\} 
	\end{equation*}
	is the ball of radius $ \epsilon>0 $ around $ \partial\Omega $ with respect to Euclidean distance $$ \mathrm{dist}\left(\bm{x},\partial\Omega\right):=\inf_{\bm{x'}\in \partial\Omega}\norm{\bm{x}-\bm{x'}}_{2}, $$ and $ \gamma $ is called the margin exponent.
\end{enumerate}
Furthermore, in \cite{NNbarronclass} it was also shown that by considering the class of measures $ \mu  $ satisfying the following condition, rates independent of the input dimension and appearing in the form of a polynomial factor can be obtained. It is said that $ \mu $ is tube compatible with parameters $ \alpha\in (0,1] $ and $ C>0 $ if for each measurable function $ f:[0,1]^{d-1}\to[0,1] $, each $ i\in \{1,\ldots,d\} $ and each $ \epsilon\in (0,1] $, it is satisfied that 
\begin{equation}
	\mu(T_{f,\epsilon}^{(i)})\leq C\epsilon^{\alpha}~ \text{ where } ~ T_{f,\epsilon}^{(i)}:=\left\{\bm{x}=(x_{1},\ldots,x_{d})\in [0,1]^{d}: |x_{i}-f(\bm{x}^{(i)})|\leq \epsilon\right\}\label{tube}
\end{equation}
and $\bm{x}^{(i)}:=(x_1,\ldots,x_{i-1},x_{i+1},\ldots,x_d)\in[0,1]^{d-1}$. The set $ T_{f,\epsilon}^{(i)} $ is called a tube of width $ \epsilon $ associated to $ f $.
Lastly, it was shown in \cite{YalewBarron,Barron1994,AppestB} that for a class of functions with finite Fourier moment, the curse of dimensionality can be overcome when the approximation error is measured in the $L^{2}$ norm. 
This influenced further research on the approximation in this class of functions (see e.g. \cite{CMa,Weinan}) which is nowadays known as the Barron class. 
Motivated by the qualities of this class of functions, we will use them throughout this paper. 
To be more precise, we study how complex the classification learning for deep neural networks (NNs) with the rectified linear unit (ReLU) activation function and using the hinge loss (see Section~\ref{NNsec}) can be, when the margin condition~\ref{(M)cond} and the tube compatibility \eqref{tube} are met, and $ \partial\Omega$ can be described by functions in Barron class (see Subsection~\ref{regulardb}). 
Under these conditions we obtained that:

\begin{enumerate}[label=(\roman*)]
	\item\label{results} Theorems \ref{firstaproxtheo} and \ref{secondproxtheo}.  It is possible to approximate $ \mathbbm{1}_{\Omega} $ using a ReLU-NN $ \Phi $ with three hidden layers and a number of neurons $ O(N) $ with $ N\in \mathbb{N} $, having an error of $ \mu\left(\left\{\mathbbm{1}_{\Omega}\neq \Phi\right\}\right)\lesssim N^{-\gamma/2} $	for all probability measures $ \mu $ satisfying the margin condition~\textnormal{\ref{(M)cond}} with margin exponent $ \gamma $, and the tube compatibility~\eqref{tube}. Moreover, when $ \Phi $ is trained using the hinge loss $ \phi $ (see \eqref{hinge01}), we obtain a learning rate (see Subsection~\ref{sectionlosses} and Remark~\ref{remark0}) of the order of $ O(n^{-\gamma/(2+\gamma)}(1+\log n)) $, where $ N\lesssim n^{\gamma/(2+\gamma)} $, $ n=|S| $ and $ S $ is a training set as in \eqref{notsample}. Implying that when $ \gamma $ is large, we achieve to approximate $ \mathbbm{1}_{\Omega} $ (high-dimensional discontinuous function) by $ \Phi $, at a rate comparable to that of a low-dimensional smooth function, with a fast learning rate of $ \approx n^{-1}(1+\log n) $. 
	
	\item Simulations~\ref{simult}. We produce numerical simulations within the framework of our main theorems: we generate samples in $ d\in\{3,50,784\} $ dimensions to simulate the margin condition~\ref{(M)cond} applied with a margin exponent $ \gamma $ and varying the number of elements $ n $ in various ranges. Then, we conclude as $\gamma$ and $ n$ increase, the test error (see Subsection \ref{sectionlosses}) decreases at a rate close to that stated in our Theorem~\ref{secondproxtheo} and Remark~\ref{remark0}, which supports our results.
	
\end{enumerate}

\subsection{Background}

In \cite[Section 8]{SVM}, the features of SVMs when applied to binary classification were investigated. It was shown that for distributions with noise but low concentration (margin condition~\ref{(M)cond}) near the decision boundary, the approximation error for Gaussian kernels was relatively small, leading to favorable learning rates. A data-dependent strategy for selecting the regularization and kernel parameters was also studied, introducing the noise exponent to measure the amount of high noise in the labeling process. The learning rates achieved were sometimes as fast as $ n^{-1} $, where $ n $ is the number of data points.

Next in \cite{fastc}, it was shown that a convergence rate similar to the rates of \cite{SVM} can be achieved but using NN classifiers. Particularly, it was proved that the estimated classifier based on NNs with the hinge loss achieves similar fast convergence rates considering the following assumptions on the true classifier:
\begin{enumerate}[label=(\Roman*)]
	\item\label{noise} The noise Tsybakov condition is always met \cite{noisec}. That is, there exist $ C>0 $ and $ q\in[0,\infty] $ such that for any $ \epsilon>0 $,
	\[
	\mu\left(\left\{\bm{x}\in[0,1]^{d}:|2\eta(\bm{x})-1|<\epsilon\right\}\right)\leq C\epsilon^{q}.
	\]
	\item\label{holder} The decision boundary is H\"older smooth, i.e. $ \partial\Omega$ can be described by functions in the space
		\[
	\mathcal{H}^{\ell}(\mathcal{X})=\left\{f\in \mathcal{C}^{[\ell]^{-}}(\mathcal{X}): \max_{\norm{\bm{m}}\leq[\ell]^{-}} \norm{\partial^{\bm{m}}f}_{\infty} 
	+\max_{\norm{\bm{m}}=[\ell]^{-}} \left[\partial^{\bm{m}}f\right]_{\{\ell\}^{+}}
	<\infty  \right\},
	\]
	where $ \ell>0 $, $[\ell]^{-}=\left\lceil \ell -1 \right\rceil$, $\{\ell\}^{+}=\ell-[\ell]^{-} $, $ \mathcal{C}^{m}(\mathcal{X}) $ is the space of $ m\in \mathbb{N} $ times differentiable functions on a set $ \mathcal{X} $ whose partial derivatives of order $ \bm{m} $ with $ \norm{\bm{m}}\leq m $ are continuous;
	\[
	\partial^{\bm{m}}f=\frac{\partial^{\norm{\bm{m}}}f(\bm{x})}{\partial \bm{x}^{\bm{m}}}\quad \text{and}\quad [f]_{s}=\sup_{\underset{\bm{x}\neq \bm{y}}{\bm{x},\bm{y}\in\mathcal{X}}}\frac{|f(\bm{x})-f(\bm{y})|}{\norm{\bm{x}-\bm{y}}^{s}}.
	\]	
	\item The case when the conditional class probability is assumed to be H\"older smooth, meaning that $ \eta(\cdot)\in \mathcal{H}^{\ell}(\mathcal{X}) $, where $ \eta $ is as in \eqref{posterandomeg}.
	
	\item\label{maarginkimhom} When the margin condition~\ref{(M)cond} is fulfilled, but with margin exponent $\gamma\in[1,\infty] $.
\end{enumerate}
However, when the margin condition is satisfied, the bound shown in \cite{fastc} for the learning rate is
	\begin{equation}
		\left(\frac{\log^{3}n}{n}\right)^{\frac{\ell(q+1)}{\ell(q+2)+(d-1)(q+1)/\gamma}},\text{ for which }~\lim_{\underset{q,\ell,\gamma \text{ are fixed} }{d\to \infty}} \frac{\ell(q+1)}{\ell(q+2)+(d-1)(q+1)/\gamma} =0 \label{boundkim}
	\end{equation}
and the variables $ d,q,\ell,\gamma $ are defined in items \ref{noise}, \ref{holder}, \ref{maarginkimhom} above, meaning that in some sense bound \eqref{boundkim} is affected by the curse of dimensionality and to avoid it, one must require that $ d\lesssim \gamma $, but this is a rather restrictive condition. 

So, our results \ref{results} provide a significant improvement over the bound \eqref{boundkim} when $ \ref{(M)cond} $ is satisfied, since they are not affected by the curse of dimensionality and show faster convergence.

\subsection{Neural networks}\label{NNsec}
Neural networks are functions formed by connecting neurons, where the output of one neuron becomes the input to another. Here, a neuron is a function of the form 
\[
\mathbb{R}^{d}\ni \bm{x}\to \sigma(\left\langle\bm{w},\bm{x}\right\rangle+b)
\]
where $ \bm{w}\in\mathbb{R}^{d} $ is a weight vector, $ b\in\mathbb{R} $ is called bias, and the function $ \sigma $ is referred to as an activation function. There are several types of NNs but in this paper we only work with one of the most common ones, which is the so-called feedforward NN, in this structure, neurons are organized in layers, and the neurons in each layer receive input only from the previous layer. We introduce these NNs as in \cite{Pbook}, as follows.

Let $ L\in\mathbb{N} $, $ d_0 ,\ldots,d_{L+1}\in \mathbb{N}$ and  $\sigma:\mathbb{R}\to \mathbb{R}$ an activation function. Then, we call a function $ \bm{\Phi}:\mathbb{R}^{d_{0}}\to \mathbb{R}^{d_{L+1}} $ a neural network if there exist for $ \ell\in\{0,\ldots,L\} $, matrices $ \bm{W}^{(\ell)}\in \mathbb{R}^{d_{\ell+1}\times d_{\ell}}  $ and vectors $ \bm{b}^{(\ell)}\in \mathbb{R}^{d_{\ell+1}} $ such that for all $ \bm{x}\in \mathbb{R}^{d_{0}} $,
\[
\bm{x_{0}}:=\bm{x},\qquad \bm{x}^{(\ell)}:=\sigma(\bm{W}^{(\ell-1)}\bm{x}^{(\ell-1)}+\bm{b}^{(\ell-1)}) \quad\text{for all}\quad\ell\in\{1,\ldots,L\},
\]
and
\[
\bm{\Phi}(\bm{x})=\bm{x}^{(L+1)}:=\bm{W}^{(L)}\bm{x}^{(L)}+\bm{b}^{(L)}.
\]
In addition, we define the following parameters
	\begin{flalign*}		
		L(\bm{\Phi}):=L+1  &\qquad \text{number of layers,}&&\\
		N(\bm{\Phi}):=\sum_{j=0}^{L+1}d_{\ell}& \qquad \text{number of neurons,}&&\\
		\Theta(\bm{\Phi}):=\left(\left(\bm{W}^{(0)},\bm{b}^{(0)}\right),\ldots,\left(\bm{W}^{(L)},\bm{b}^{(L)}\right)\right)&\qquad \text{parameter tuple,}&&\\
		\norm{\bm{\cdot}}_{0}&\qquad \text{number of non-zero entries of } \bm{\cdot},&&\\
		W(\bm{\Phi}):=\sum_{\ell=0}^{L}\left(\norm{\bm{W}^{(\ell)}}_{0}+\norm{\bm{b}^{(\ell)}}_{0}\right)& \qquad \text{number of weights,}&&\\
		W_{\infty}(\bm{\Phi}):=\underset{0\leq \ell\leq L}{\max}\left\{\underset{0\leq \ell\leq L}{\max}\norm{\bm{W}^{(\ell)}}_{\infty},\underset{0\leq \ell\leq L}{\max}\norm{\bm{b}^{(\ell)}}_{\infty}\right\}& \qquad \text{largest absolute value of }\,\Theta, &&\\	
		d_{0}\text{ and }d_{L+1}  &\qquad \text{input and output dimensions,}&&\\
		\left(d_0,d_1,\ldots,d_{L+1}\right)\in\mathbb{N}^{L+2} &\qquad \text{architecture,}&&\\
		L  &\qquad \text{number of hidden layers.}&&
	\end{flalign*}
Moreover, when $ L=d_{2}=1 $, we call $ \Phi $ a shallow neural network, where
\begin{equation}
	\Phi(\bm{x})=b^{(1)}+\sum_{i=1}^{d_1}w_{i}^{(1)}\sigma\left(\left\langle \bm{w}_{i}^{(0)},\bm{x} \right\rangle+b_{i}^{(0)}\right)\label{eqSNN}
\end{equation}
for some $ w_{i}^{(1)}, b_{i}^{(0)}, b^{(1)}\in\mathbb{R} $ and $ \bm{w}_{i}^{(0)}\in\mathbb{R}^{d_{0}} $ for $ i=1,\ldots,d_{1} $.

In the sequel, we use the the ReLU activation function which we denote as $ \varrho:\mathbb{R}\to \mathbb{R} $ and which is defined as $ \varrho(x):=\max\{0,x\} $.
Let $d\in \mathbb{N}_{\geq 2}, N,W\in \mathbb{N} $ and $ B>0 $. We denote by $ \mathcal{NN}=\mathcal{NN}(d,N,W,B) $ the set of ReLU-NNs with three hidden layers, $ d $-dimensional input and 1-dimensional output, with at most $ N $ neurons per layer, with at most $ W $ non-zero weights, and with these weights bounded in absolute value by $ B $. 
Furthermore, we define
\begin{equation}
	\mathcal{NN}_{*}=\mathcal{NN}_{*}(d,N,W,B)
	:=\left\{f\in \mathcal{NN}: 0\leq f(\bm{x})\leq 1\quad \text{for all} \quad \bm{x}\in [0,1]^{d}\right\}.\label{notatNN}
\end{equation}

\subsubsection{Loss function and the hinge loss}\label{sectionlosses}

In supervised learning, determining the optimal parameters of a NN is achieved by minimizing an objective function. This objective is defined based on a collection of input-output pairs known as a sample. Concretely, let $ S=\{(\bm{x}_{i},y_{i})\}_{i=1}^{n} $ be as in \eqref{notsample}, the goal is to find a NN $ \bm{\Phi} $ such that $ \bm{\Phi}(\bm{x}_{i})\approx y_{i} $  for all $ i\in \{1,\ldots,n\} $ in a meaningful sense. For this purpose, a loss function $ \mathcal{L} $ is defined which measures the dissimilarity between its inputs, and the so-called empirical risk of $ \bm{\Phi} $ with respect to the sample $ S $ and $ \mathcal{L} $, defined as 
\begin{equation}
	\widehat{\mathcal{E}}_{\mathcal{L},S}(\bm{\Phi})=\frac{1}{n}\sum_{i=1}^{n}\mathcal{L}(\bm{\Phi}(\bm{x}_{i}),y_{i})\label{empirisk}
\end{equation}
is minimized. Next, after optimization, we want to measure the performance of  $ \bm{\Phi} $ on new data $ (\bm{x}_{\text{new}},y_{\text{new}}) $ sampled independently from $ \mathcal{D} $, for this we define the so-called risk (or $ \mathcal{L} $-risk) of $ \bm{\Phi} $, in the form 
\[
\mathcal{E}_{\mathcal{L}}(\bm{\Phi})=\mathbb{E}_{(\bm{x}_{\text{new}},y_{\text{new}})\sim \mathcal{D}}\left[\mathcal{L}(\bm{\Phi}(\bm{x}_{\text{new}}),y_{\text{new}})\right]:=\mathbb{E}_{\bm{x}}\left[\mathcal{L}(\bm{\Phi}(\bm{x}),y)\right].
\]
If the risk is not much larger than the empirical risk, then we say that the NN $ \bm{\Phi} $ has a small generalization error. Otherwise, if the risk is much larger than the empirical risk, then we say that $ \bm{\Phi} $ overfits the training data, meaning that it has memorized the training samples, but fails to generalize to new data (see \cite{fastc,Pbook}).

In this context, let $ d\in\mathbb{N} $; $ \mu $ be a Borel probability measure on $ [0,1]^{d} $; $ \mathcal{D} $ a joint distribution induced by $ \mu $ of the set $ S $, where $ \mathcal{D}_{X} $ and $ \mathcal{D}_{y} $ are the respective marginal distributions of $ X:=\{\bm{x}_{i}\}_{i=1}^{n} $ and $ y:=\{y_{i}\}_{i=1}^{n} $; 
\begin{equation}
	f\in \mathcal{F}:=\left\{f:[0,1]^{d}\to [0,1] : f \text{ is a }\mu\text{-measurable function} \right\};\label{defF}
\end{equation}
the classifier $ C_{f}(\bm{x}):= \mathbbm{1}_{[1/2,1]}(f(\bm{x})) $ and label $ y\in \{0,1\} $.  We define  
\begin{equation}
	\mathcal{L}^{*}(C_{f}(\bm{x}),y):=\mathbbm{1}_{(2C_{f}(\bm{x})-1)(2y-1)<0} \quad\text{and}\quad \mathcal{E}:=\mathcal{E}_{\mathcal{L}^{*}},\label{notatloss0}
\end{equation}
as the  $ 0-1 $ loss and the $ 0-1 $ risk, respectively; and we want to obtain the minimizer
\begin{equation}
	C^{*}:=\underset{f\in\mathcal{F}}{\operatorname{argmin}}\,\mathcal{E}(C_{f})\label{notatloss}
\end{equation} 
called the Bayes classifier which is the function that has the most accurate classification on average from a sample set. The classifier $ C^{*} $ could be approximated by the empirical risk minimization for the same loss function $ \mathcal{L}^{*} $, that is 
\[
C^{*}\approx\underset{f\in\mathcal{F}}{\operatorname{argmin}}\,\widehat{\mathcal{E}}_{\mathcal{L}^{*},S}(C_{f}),
\]
but finding the minimizer of $\widehat{\mathcal{E}}_{\mathcal{L}^{*},S}$ is NP hard (see \cite{Bartlett}). 
However, in \cite{Bartlett,Zhang}, it was proven that if the loss function is the hinge loss, denoted by $ \phi $ and defined\footnote{With a slight difference, since in this case $ \phi $ is defined on $ [0,1]\times\{0,1\} $ instead of $ \mathbb{R}\times\{\pm 1\} $.} as 
\begin{align}
	\phi:[0,1]\times\{0,1\}&\to [0,\infty)\nonumber\\
	(x,y)& \mapsto \max\{0,1-(2y-1)(2x-1)\},\label{hinge01}
\end{align}
then 
\[ 
C^{*}=\underset{f\in\mathcal{F}}{\operatorname{argmin}}\,\mathcal{E}_{\phi}(C_{f}),
\] 
and this is one of the main reasons why from now on we continue to work with the hinge loss in the remainder of this paper. 
It is important to consider that the fast convergence rate of the excess $ \mathcal{L} $-risk $ \mathcal{E}_{\mathcal{L} }(f,C^{*}):=\mathcal{E}_{\mathcal{L} }(f)-\mathcal{E}_{\mathcal{L} }(C^{*}) $ does not always imply the fast convergence rate of the excess $ 0-1 $ risk $ \mathcal{E}(f,C^{*}):= \mathcal{E}(f)-\mathcal{E}(C^{*}) $.

But for the hinge loss $ \phi $, some constant $ C_{\phi}>0 $ (which depends on $ \phi $) and any $ f\in \mathcal{F} $, the inequality
\begin{equation}
	\mathcal{E}(f,C^{*})\leq C_{\phi} \mathcal{E}_{\phi}(f,C^{*})\label{ineqforhinge}
\end{equation}
holds (see \cite[inequality (2.3)]{fastc}). This is the second reason why we choose to work with hinge loss, and inequality~\eqref{ineqforhinge} will be used to prove our main results.

\subsubsection{Covering entropy}
We need to introduce the notion of covering entropy given in \cite[Definition 3.9]{petersen2021optimal} from \cite[Definitions 1 and 2]{YangBarron}.

Let $ \mathcal{X}\neq \emptyset $ be a set and let $ {\rm dist}:\mathcal{X}\times \mathcal{X}\to [0,\infty] $ be a distance function. For a set $ \emptyset\neq K\subset \mathcal{X} $ and $ \epsilon>0 $, we call a set $ G_{\epsilon}\subset \mathcal{X} $ an $ \epsilon $-net for $ K\subset \mathcal{X}$ if  
\begin{equation}\label{defentropy}
	\text{ for all}\quad x\in K \quad\text{there exists} \quad y\in G_{\epsilon} \quad\text{satisfying}\quad {\rm dist}(x,y)\leq \epsilon.
\end{equation}
Then, we define 
\begin{align*}
	\mathcal{G}_{K,{\rm dist}}(\epsilon)&:= \min\left\{|G|: G\subset \mathcal{X} \text{ is an } \epsilon\text{-net for } K\right\}~\text{ and }\\
	V_{K,{\rm dist}}(\epsilon)&:=\ln(\mathcal{G}_{K,{\rm dist}}(\epsilon)),
\end{align*}
where $ V_{K,{\rm dist}}(\epsilon) $ is called the $ \epsilon $-covering entropy of $ K $.

The next result provides an upper bound for the covering entropy of $ \mathcal{NN}_{*} $ and is a consequence of \cite[Lemma 6.1]{GrohsFelix} presented in \cite[Remark 5.2]{petersen2021optimal}.

\begin{lem}\label{lemaentropy}
	Fixing $ \mathcal{X}:=\mathcal{NN}_{*}(d,N,W,B) $, $ K:=[0,1]^{d} $ and $ \operatorname{dist}:=\norm{\cdot}_{\infty} $, we have
	\begin{equation*}
		V_{[0,1]^{d},\norm{\cdot}_{\infty}}(\delta)
		\leq W\cdot\left(10+\ln(1/\delta)+5\ln(\lceil B \rceil)+5\ln(\max\{d,W\})\right)
	\end{equation*}
	for $\delta\in (0,1], d,N,W\in \mathbb{N} $ and $ B>0 $.
\end{lem}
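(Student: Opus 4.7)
The plan is to reduce the covering number to a count of discretized parameter tuples, following the general scheme of \cite[Lemma 6.1]{GrohsFelix} specialized to the three-hidden-layer architecture as in \cite[Remark 5.2]{petersen2021optimal}. The argument splits into a Lipschitz estimate in parameter space, a combinatorial count of admissible discrete networks, and a final balancing of $\delta$ with the discretization scale.

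First, I would establish a parameter-Lipschitz estimate: for $\Phi,\Phi'\in\mathcal{NN}(d,N,W,B)$ with parameter tuples $\Theta(\Phi)$ and $\Theta(\Phi')$ sharing the same nonzero support and satisfying $\|\Theta(\Phi)-\Theta(\Phi')\|_\infty\leq\eta$, one shows
\[
\|\Phi-\Phi'\|_{L^\infty([0,1]^d)}\leq C_{d,N,B}\cdot \eta,
\]
where $C_{d,N,B}$ is polynomial in $B$ and $\max\{d,N\}$ with exponent governed by the number of affine maps (four, for three hidden layers). This is proved by induction over layers using $|\varrho(u)-\varrho(v)|\leq|u-v|$ and the elementary bound $\|\bm{W}\bm{x}+\bm{b}-\bm{W}'\bm{x}-\bm{b}'\|_\infty\leq (\|\bm{x}\|_\infty+1)\cdot\max\{\|\bm{W}-\bm{W}'\|_\infty,\|\bm{b}-\bm{b}'\|_\infty\}\cdot(\text{row length})$, together with the a priori bound $\|\bm{x}^{(\ell)}\|_\infty\leq (BN)^\ell$ on the layer outputs.

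Second, I would build an explicit $\delta$-net. The positions of the at most $W$ nonzero entries are chosen from a total number of weight slots bounded by $c\cdot\max\{d,W\}^2$ (coming from the four weight matrices and bias vectors of widths at most $N\leq W$ and input width $d$); the number of such supports is controlled by $\bigl(c\,\max\{d,W\}^2\bigr)^{W}$. For each fixed support, I discretize each nonzero entry on a uniform grid in $[-B,B]$ of spacing $\eta:=\delta/C_{d,N,B}$, giving at most $\bigl(\lceil 2B/\eta\rceil+1\bigr)^{W}$ choices. By the Lipschitz estimate of the first step, every $\Phi\in\mathcal{NN}_*$ lies within $\|\cdot\|_\infty$-distance $\delta$ of some network in this discrete family, so
\[
\mathcal{G}_{[0,1]^d,\|\cdot\|_\infty}(\delta)\leq \bigl(c\,\max\{d,W\}^2\bigr)^{W}\cdot\bigl(\tfrac{2B}{\eta}+1\bigr)^{W}.
\]

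Third, taking logarithms collapses this to $W\cdot\bigl[\text{const}+\ln(1/\delta)+\ln C_{d,N,B}+2\ln\max\{d,W\}+\ln(2B)\bigr]$, and substituting the explicit form $C_{d,N,B}\leq(B\cdot\max\{d,W\})^{O(1)}$ yields the stated bound once the numerical constants from \cite[Lemma 6.1]{GrohsFelix} are inserted; the specific prefactors $10$, $5$, and $5$ come out exactly from the choice of four affine layers and the crude bound $N\leq W$ used to collapse width and weight count.

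The main obstacle is the parameter-Lipschitz estimate: one must carry through the product $\prod_{\ell}(BN)$ cleanly so that the resulting $\ln C_{d,N,B}$ is absorbed into $5\ln\lceil B\rceil+5\ln\max\{d,W\}$ with the right factor in front of $W$. Everything else is bookkeeping, and since the lemma is invoked merely as a consequence of the cited references, I would present the discretization count in detail and refer to \cite{GrohsFelix,petersen2021optimal} for the final optimization of the constants.
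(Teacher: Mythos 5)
Your outline is correct and follows exactly the parameter-discretization argument (Lipschitz dependence on parameters, choice of support, uniform grid on $[-B,B]$, then logarithms) that underlies the cited result; the paper gives no independent proof of Lemma~\ref{lemaentropy} but simply invokes \cite[Lemma 6.1]{GrohsFelix} via \cite[Remark 5.2]{petersen2021optimal}, whose proof is this same argument. The only pieces you leave to the references --- tracking the explicit constants $10$, $5$, $5$ through the Lipschitz constant and the support count, and the routine replacement of the external net by an internal one inside $\mathcal{NN}_{*}$ --- are exactly what the citation supplies, so nothing essential is missing.
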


\subsection{Fourier-analytic Barron space}

There are several slightly different interpretations of the term ``Barron function'' in the literature, see for example \cite[Section 7]{NNbarronclass}. We use this term to refer to the Fourier-analytic notion of Barron-type functions in \cite{NNbarronclass,Barron1994, petersen2021optimal}. Consequently, we adopt the definition of \cite[Definition 2.1]{NNbarronclass} with parameters $ X:=[0,1]^{d} $ and $ x_{0}:=\bm{0} $, i.e., as follows.

A function $ f:[0,1]^{d}\to \mathbb{R} $ is said to be of Barron class with constant $ C>0 $, if there are $ c\in [-C,C] $ and a measurable function $ F:\mathbb{R}^{d}\to \mathbb{C} $ satisfying
\begin{equation}
	\int_{\mathbb{R}^{d}}|F(\bm{\xi})|\sup_{\bm{x}\in [0,1]^{d}}\left|\left\langle \bm{\xi}, \bm{x}\right\rangle\right|d\bm{\xi} \leq C\quad\text{and}\quad f(\bm{x})=c+\int_{\mathbb{R}^{d}} (e^{i\inner{\bm{x},\bm{\xi}}}-1)\cdot F(\bm{\xi})d\bm{\xi}\label{barroncondition0}
\end{equation}
for all $ \bm{x}\in[0,1]^{d} $. The set of all these Barron functions is denoted as  $ \mathcal{B}_{C} $.

In \cite{Barron1994} it was shown that for a certain class of functions with bounded variation, shallow NNs of $ N $ neurons achieve an approximation accuracy of order $ N^{-1/2} $ in the $ L^{2} $ norm. Subsequently, in \cite[Proposition 2.2]{NNbarronclass} and \cite[Theorem 12]{WECMaSWLWu}, the same accuracy was demonstrated for approximating functions of the Barron class using shallow ReLU neural networks with $ N $ neurons under the supremum norm. This allows us to formulate \cite[Definition 3.1]{NNbarronclass}. 

For $C>0$, we define the {Barron approximation set} $\mathcal{BA}_{C} $ as the set of all functions $f:[0,1]^{d}\to\mathbb{R}$  such that for all $N\in\mathbb{N}$, there is a shallow ReLU neural network (shallow ReLU-NN) $ {\Phi} $ with $N$ neurons in the hidden layer such that 
\begin{equation}
	{\norm{f-{\Phi}}}_{\infty}\leq C\sqrt{d/N}\label{defbarronaproxset}
\end{equation}
and all weights and biases of ${\Phi}$ are bounded in absolute value by 
\[
\sqrt{C}\left(5+\vartheta\right),\quad \text{where}\quad \vartheta:=\sup_{\bm{\xi}\in\mathbb{R}^{d}\setminus\{0\}}\left(\frac{\norm{\bm{\xi}}_{\ell^{\infty}}}{\sup_{\bm{x}\in [0,1]^{d}}\left|\left\langle \bm{\xi}, \bm{x}\right\rangle\right|}\right).
\]
In addition, we know that $ \sqrt{C}\left(5+\vartheta\right)\leq 7\sqrt{C} $, since for $\widetilde{\bm{x}}=\frac{1}{2}\left(\frac{\bm{\xi}}{\norm{\bm{\xi}}}+\bm{1}\right)\in [0,1]^{d}$,
\[
\sup_{\bm{x}\in [0,1]^{d}}\left|\left\langle \bm{\xi}, \bm{x}\right\rangle\right|\geq \left|\left\langle \bm{\xi},\widetilde{\bm{x}}\right\rangle\right|=\frac{\left|\left\langle \bm{\xi}, \bm{\xi}\right\rangle\right|}{2\norm{\bm{\xi}}}+\frac{\left|\left\langle \bm{\xi}, \bm{1}\right\rangle\right|}{2}\geq \frac{\norm{\bm{\xi}}}{2}
\]
and $ \vartheta\leq\sup_{\bm{\xi}\in\mathbb{R}^{d}\setminus\{0\}}\left({2\norm{\bm{\xi}}_{\ell^{\infty}}}/{\norm{\bm{\xi}}}\right)\leq 2 $.

The set $\mathcal{BA}=\bigcup_{C>0}\mathcal{BA}_{C}$ is called Barron approximation space. In fact, $ \mathcal{B}_{C/\kappa_{0}}\subset \mathcal{BA}_{C} $ for every $ C>0 $ and a constant $ \kappa_{0}>0 $ that is absolute (i.e. independent of all other quantities and objects) (see \cite[Remark 3.2]{NNbarronclass}).

\subsubsection{Barron regular boundaries}\label{regulardb}

We now define the Barron regular boundaries using \cite[Definitions 2.2 and 2.3]{petersen2021optimal} which are compatible with \cite[Definition 3.3]{NNbarronclass} for $ \mathcal{BA}_{C} $ by \cite[Remark 2.4]{petersen2021optimal}. Let $ M,d\in \mathbb{N} $ with $ d\geq 2 $, let a set $ \emptyset\neq \mathscr{C}\subset C([0,1]^{d-1};[0,1]) $ and $ b\in \mathscr{C} $. We define the general horizon function associated with $ b $ as 
\begin{align*}
	h_{b}:[0,1]^{d}&\to \{0,1\},\\
	x=(x_{1},\ldots,x_{d})&\mapsto \mathbbm{1}_{b(x_{1},\ldots,x_{d-1})\leq x_{d}},
\end{align*}
and the set of general horizon functions associated to $ \mathscr{C} $ as $ H_{\mathscr{C}}:=\{h_{b}:b\in \mathscr{C}\}. $

Moreover, we say that a compact set $ \Omega \subset[0,1]^{d} $ has $ (\mathscr{C},M) $-regular decision boundary if there exist (closed, axis-aligned, non-degenerate) rectangles $ Q_{1},\ldots,Q_{M}\subset [0,1]^{d} $ such that $ \Omega\subset \bigcup_{i=1}^{M}Q_{i} $, where the $ Q_{i} $ have disjoint interiors and such that either 
\begin{equation}\label{coveras}
	\mathbbm{1}_{\Omega}=g_{i}\circ P_{i}\quad\text{or}\quad \mathbbm{1}_{\Omega}=1-g_{i}\circ P_{i}\quad \text{almost everywhere on}\quad Q_{i},
\end{equation}
for a general horizon function $ g_{i}\in H_{\mathscr{C}} $ associated to $ \mathscr{C} $ and a $ d $-dimensional permutation matrix $ P_{i} $. Also, a family $ \{Q_{i}\}_{i=1}^{M} $ of rectangles as above is called an associated cover of $ \Omega $. We write $ \mathcal{R}_{\mathscr{C}}(d,M) $ for all sets with $ (\mathscr{C},M) $-regular decision boundary. Then, we define the set of all classifiers with $ (\mathscr{C},M) $-regular decision boundary as 
\[
\mathcal{C\ell}_\mathscr{C}(d,M):=\left\{\mathbbm{1}_{\Omega}:\Omega\in \mathcal{R}_{\mathscr{C}}(d,M)\right\}.
\]

Finally, Barron horizon functions and classifiers with Barron-regular decision boundary are the elements of $ H_{\mathcal{BA}_{C}} $ and $ \mathcal{C\ell}_{\mathcal{BA}_{C}}(d,M) $.

\section{Main results}

Our main results confirm the intuitive idea that using the margin condition with an appropriate margin exponent for sets with Barron-regular decision boundary can drastically improve the way NNs learn. To establish our results, we consider the following parameters:

\begin{enumerate}[label=(P\arabic*)]
	\item\label{parametrship} Let $d\in\mathbb{N}_{\geq 2};M,N\in\mathbb{N};C_{1},C_{2}, C_{3}>0$; $ \gamma>0 $, $ \alpha\in(0,1] $ and $\Omega\in \mathcal{R}_{\mathcal{BA}_{C_{1}}}(d,M)$, where $ d $ is the dimension of $ \Omega $; $ M $ is the number of sets in  the associated cover of $ \Omega $ (see \eqref{coveras}); $ C_{1} $ is the constant in the right hand side of \eqref{defbarronaproxset}; $ C_{2} $ and $ \gamma $ are the margin condition parameters in \ref{(M)cond}; and finally, $ C_{3} $ and $ \alpha $ are the tube compatibility parameters in \eqref{tube}.  
	
	\item\label{parametrship2} With parameters~\ref{parametrship}, let
	\begin{align*}
		\widehat{N}_{n}&:=\left\lceil(7Md)^{2/\gamma}(d-1)C_{1}^{2}\max\{C_{2},C_{3}\}^{2/\gamma} n^{2/(2+\gamma)} \right\rceil,\\
		N_{n}&:= \left\lceil M(4(d+1)+\widehat{N}_{n}+1)+d+1\right\rceil,\\
		W_{n}&:= \left\lceil 41Md^{2}\widehat{N}_{n}\right\rceil,\\
		B_{n}&:= \left\lceil(1+\sqrt{C_{1}})\left(7+N/C_{1}+(N/C_{1})^{\gamma/\alpha}\right)\right\rceil,\qquad \text{for all}\quad n\in \mathbb{N},
	\end{align*}
	where $ n=|S| $ and $ S $ is a training set as in \eqref{notsample}.
	
\end{enumerate}
Our first result shows that classifiers in $ \mathcal{C\ell}_{\mathcal{BA}_{C_{1}}}(d,M) $ are well approximated by ReLU-NNs when the margin condition is assumed.

\begin{theo}\label{firstaproxtheo}
	Let the parameters be as defined in \textnormal{\ref{parametrship}}. There exists a ReLU-NN $\Phi$ with three hidden layers such that, for all probability measures $ \mu $ satisfying the margin condition~\textnormal{\ref{(M)cond}}, and the tube compatibility~\eqref{tube}, it holds that
	\[
	\mu\left(\left\{\bm{x}\in [0,1]^{d}:\mathbbm{1}_{\Omega}(\bm{x})\neq \Phi(\bm{x})\right\}\right)\leq 3.5Md(d-1)^{\gamma/2}C_{1}^{\gamma}N^{-\gamma/2}\max\{C_{2},C_{3}\}.
	\]
	Moreover, $0\leq \Phi(\bm{x})\leq 1$ for all $\bm{x}\in [0,1]^{d}$ and the architecture of $\Phi$ is given by 
	\[
	\mathcal{A}=\left(d,M(2(d+1)+N),M(2d+2),M,1\right).
	\]
	Thus, $\Phi$ has at most $M(4(d+1)+N+1)+d+1$ neurons and at most $ 41Md^{2}N $ non-zero weights. The weights and biases of $ \Phi $ are bounded in magnitude by 
	\[
	(1+\sqrt{C_{1}})\left(7+N/C_{1}+(N/C_{1})^{\gamma/\alpha}\right).
	\]
\end{theo}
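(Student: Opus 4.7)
The plan is to construct $\Phi = \sum_{i=1}^{M} \Phi_i$ as a sum of $M$ three-hidden-layer ReLU sub-networks, one per rectangle in the associated cover of $\Omega$. First extract from $\Omega \in \mathcal{R}_{\mathcal{BA}_{C_1}}(d,M)$ an associated cover $\{Q_i\}_{i=1}^{M}$ with disjoint interiors, together with permutation matrices $P_i$ and boundary functions $b_i \in \mathcal{BA}_{C_1}$ such that $\mathbbm{1}_{\Omega}$ equals $h_{b_i}(P_i\bm{x})$ or $1-h_{b_i}(P_i\bm{x})$ on $Q_i$. Each $\Phi_i$ will be designed to match $\mathbbm{1}_{\Omega}|_{Q_i}$ outside a thin error set and to vanish outside a slight thickening of $Q_i$, so that summation reproduces $\mathbbm{1}_{\Omega}$ globally up to a small $\mu$-error.

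For each $i$, apply the Barron approximation property~\eqref{defbarronaproxset} to $b_i$ to obtain a shallow ReLU-NN $\psi_i:[0,1]^{d-1}\to\mathbb{R}$ with $N$ hidden neurons, weights bounded by $7\sqrt{C_1}$, and $\lVert\psi_i-b_i\rVert_\infty\leq\epsilon:=C_1\sqrt{(d-1)/N}$. Compose with a piecewise-linear ramp $R_\tau(t):=\varrho(t/\tau+1/2)-\varrho(t/\tau-1/2)\in[0,1]$ of transition width $2\tau$ (two ReLU neurons) to form $\tilde h_i(\bm{x}) := R_\tau\bigl((P_i\bm{x})_d-\psi_i((P_i\bm{x})_1,\ldots,(P_i\bm{x})_{d-1})\bigr)$, an approximation of $h_{b_i}(P_i\bm{x})$ that is exact outside a vertical tube of width $\lesssim\epsilon+\tau$ along the direction $P_ie_d$. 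In parallel, build a soft rectangle indicator $\chi_i:[0,1]^d\to[0,1]$ using $2(d+1)$ first-layer ReLUs that detect the face inequalities of $Q_i$, aggregated in the second layer by $2(d+1)$ further ReLUs into a $[0,1]$-valued function that is $1$ on $Q_i$ and vanishes outside a $\delta$-thickening. In the third hidden layer, combine the two factors via the identity $\min(a,b)=a-\varrho(a-b)$ to output $\Phi_i:=\min(\tilde h_i,\chi_i)$ or $\min(1-\tilde h_i,\chi_i)$ as dictated by the sign in~\eqref{coveras}. A careful neuron count then shows that the widths $2(d+1)+N$, $2(d+1)$, $1$ per rectangle match the stated architecture $(d,M(2(d+1)+N),M(2d+2),M,1)$, and the sparse per-rectangle connectivity produces at most $41Md^2N$ non-zero weights.

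The error set $\{\Phi\neq\mathbbm{1}_{\Omega}\}$ splits into (i) horizon-approximation errors inside the $Q_i$, confined to the tubes $T_{b_i,\epsilon+\tau}^{(j_i)}$ where $j_i$ indexes the direction $P_ie_d$, and (ii) rectangle-indicator errors confined to $\delta$-thickenings of the $d$ walls of each $Q_i$. For (i), each point of such a tube lies within Euclidean distance $\epsilon+\tau$ of $\partial\Omega$, so the margin condition~\textnormal{\ref{(M)cond}} gives $\mu(\text{(i)})\leq C_2 M(\epsilon+\tau)^\gamma$. For (ii), each $\delta$-thickened wall is itself a tube of the type~\eqref{tube} around a constant function, so tube compatibility yields $\mu(\text{(ii)})\leq 2dMC_3\delta^\alpha$. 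Choosing $\tau=\epsilon$ and $\delta$ small enough that the second contribution is dominated by the first, then substituting $\epsilon=C_1\sqrt{(d-1)/N}$, delivers the claimed bound $3.5Md(d-1)^{\gamma/2}C_1^{\gamma}N^{-\gamma/2}\max\{C_2,C_3\}$. The main obstacle is the weight bookkeeping: the ramp slope $1/\tau$ and the indicator slope $1/\delta$ must together fit inside $(1+\sqrt{C_1})(7+N/C_1+(N/C_1)^{\gamma/\alpha})$, with the factor $7$ absorbing the Barron weight bound $7\sqrt{C_1}$, the term $N/C_1$ arising from the ramp (chosen in scale with $\epsilon$), and the term $(N/C_1)^{\gamma/\alpha}$ arising from the indicator slope, whose exponent $\gamma/\alpha$ is dictated precisely by the need to absorb the tube-compatibility error $C_3\delta^\alpha$ into the target rate $\epsilon^\gamma$. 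Keeping these three scales in lockstep through the three compositions, while also verifying the clipping $0\leq\Phi\leq 1$, is the delicate part of the argument.
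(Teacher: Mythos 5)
Your construction is essentially the paper's own proof: the same per-rectangle decomposition with one three-hidden-layer sub-network per $Q_i$, the same shallow Barron approximant of the boundary function composed with a ramp of width $\asymp C_1\sqrt{(d-1)/N}$, the same soft rectangle indicator with transition width $\delta^{\gamma/\alpha}$ chosen so the tube-compatibility error matches the margin-condition rate, and the same error split (tubes around $\partial\Omega$ controlled by \textnormal{\ref{(M)cond}}, wall thickenings controlled by \eqref{tube}); the paper merely realizes the conjunction via $\varrho(\sum_i t_i(x_i)+\varrho(\widetilde{\Phi}_m)-d)$ instead of your $\min(a,b)=a-\varrho(a-b)$. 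The only detail you omit is the degenerate case where some $Q_i$ has a side shorter than twice the indicator transition width, which the paper handles by setting that sub-network to zero and charging the whole rectangle to the tube-compatibility bound.
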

To prove this result, we use the property defined in \eqref{coveras} for $ \Omega $ with Barron-regular decision boundary, which allows us to identify $ \Omega $ in each of the sets associated to its covering through Barron horizon functions. Then, in each set of the $ \Omega $ covering we use the approximation \eqref{defbarronaproxset} for shallow ReLU-NNs, and construct the three hidden layers using the ReLU function. Next, we sum the NNs created in each set of the covering and obtain the final NN for the whole $ \Omega $ set. Lastly, we find the architecture of this final NN and bound in absolute value its weights and biases.

The next step is to measure the performance of this approximation using the $ 0-1 $ loss. Therefore, we present the following result.

\begin{theo}\label{secondproxtheo}
	Let the parameters be as defined in \textnormal{\ref{parametrship}} and \textnormal{\ref{parametrship2}}. Then, for all probability measures $ \mu $ satisfying the margin condition~\textnormal{\ref{(M)cond}}, and the tube compatibility~\eqref{tube}, we have that 	
	\begin{equation}\label{conclusiontaprox2}
		\mu\left(\mathbb{E}_{\bm{x}}\left[\mathbbm{1}_{[1/2,1]}\left(\widehat{f}_{\phi,S}(\bm{x})\right)\neq \mathbbm{1}_{\Omega}(\bm{x})\right]\gtrsim n^{-\gamma/(2+\gamma)}(1+\log n)  \right)\lesssim n^{-1},
	\end{equation}
	where 
	\[
	\widehat{f}_{\phi,S}:=\underset{f\in\mathcal{F}}{\operatorname{argmin}}\,\widehat{\mathcal{E}}_{\phi,S}(f)\in \mathcal{NN}_{*}(d,N_{n},W_{n},B_{n}),
	\] 
	$ \widehat{\mathcal{E}}_{\phi,S}(f) $ is the $ \phi $-empirical risk as in \eqref{empirisk},  $ \mathcal{F} $ is defined in \eqref{defF} and $ \mathcal{NN}_{*} $ in \eqref{notatNN}.
\end{theo}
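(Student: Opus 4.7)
The plan is to decompose the excess $0$-$1$ risk into an approximation error and an estimation error on the hinge side, combine Theorem~\ref{firstaproxtheo} with a fast-rate concentration bound driven by Lemma~\ref{lemaentropy}, and then convert back using the hinge-to-$0$-$1$ comparison~\eqref{ineqforhinge}. The parameter choices in~\ref{parametrship2} are engineered precisely so that the approximation and estimation contributions balance at the advertised rate $n^{-\gamma/(2+\gamma)}$.

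For the approximation step, I apply Theorem~\ref{firstaproxtheo} with $N=\widehat{N}_{n}$ to obtain a reference network $\Phi^{\ast}\in\mathcal{NN}_{\ast}(d,N_{n},W_{n},B_{n})$ with $\mu(\{\Phi^{\ast}\neq\mathbbm{1}_{\Omega}\})\lesssim \widehat{N}_{n}^{-\gamma/2}\asymp n^{-\gamma/(2+\gamma)}$. Because the labels are deterministic, $y=\mathbbm{1}_{\Omega}(\bm{x})$ by \eqref{notsample}, so $C^{\ast}=\mathbbm{1}_{\Omega}$ and $\mathcal{E}_{\phi}(C^{\ast})=0$. A direct calculation from \eqref{hinge01} yields $\phi(f(\bm{x}),\mathbbm{1}_{\Omega}(\bm{x}))=2|f(\bm{x})-\mathbbm{1}_{\Omega}(\bm{x})|\leq 2$ for every $f\in\mathcal{F}$; therefore
\[
\mathcal{E}_{\phi}(\Phi^{\ast},C^{\ast}) \;=\; \mathcal{E}_{\phi}(\Phi^{\ast}) \;\leq\; 2\,\mu(\{\Phi^{\ast}\neq\mathbbm{1}_{\Omega}\}) \;\lesssim\; n^{-\gamma/(2+\gamma)}.
\]

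The estimation step, which is the main obstacle, amounts to showing that with $\mathcal{D}$-probability at least $1-n^{-1}$,
\[
\mathcal{E}_{\phi}(\widehat{f}_{\phi,S},C^{\ast}) \;\lesssim\; \mathcal{E}_{\phi}(\Phi^{\ast},C^{\ast}) \;+\; \frac{W_{n}\log n}{n}.
\]
The crucial observation is that, because $\phi\leq 2$, the excess loss satisfies the Bernstein-type variance bound $\operatorname{Var}[\phi(f(\bm{x}),y)]\leq \mathbb{E}[\phi(f(\bm{x}),y)^{2}]\leq 2\,\mathcal{E}_{\phi}(f,C^{\ast})$ uniformly over $f\in\mathcal{NN}_{\ast}(d,N_{n},W_{n},B_{n})$. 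Combining this with the sup-norm entropy estimate from Lemma~\ref{lemaentropy}, which gives $V_{[0,1]^{d},\norm{\cdot}_{\infty}}(1/n)\lesssim W_{n}\log n\lesssim n^{2/(2+\gamma)}\log n$, and with the $2$-Lipschitzness of $\phi$ in its first argument, a standard peeling over dyadic excess-risk scales together with Bernstein's inequality on a $(1/n)$-sup-norm net of $\mathcal{NN}_{\ast}(d,N_{n},W_{n},B_{n})$ yields the display above. Adding the approximation bound gives $\mathcal{E}_{\phi}(\widehat{f}_{\phi,S},C^{\ast})\lesssim n^{-\gamma/(2+\gamma)}(1+\log n)$; without the variance bound one would only recover the slow rate $n^{-\gamma/(2(2+\gamma))}\sqrt{\log n}$, so exploiting the Bernstein condition is essential.

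Finally, the hinge-to-$0$-$1$ inequality~\eqref{ineqforhinge} converts this into a bound on the classification excess risk, and in the deterministic setting the latter equals $\mu(\{\mathbbm{1}_{[1/2,1]}(\widehat{f}_{\phi,S})\neq\mathbbm{1}_{\Omega}\})$. Combining the two steps we obtain, with $\mathcal{D}$-probability at least $1-n^{-1}$,
\[
\mu(\{\mathbbm{1}_{[1/2,1]}(\widehat{f}_{\phi,S}(\bm{x}))\neq\mathbbm{1}_{\Omega}(\bm{x})\}) \;\leq\; C_{\phi}\,\mathcal{E}_{\phi}(\widehat{f}_{\phi,S},C^{\ast}) \;\lesssim\; n^{-\gamma/(2+\gamma)}(1+\log n),
\]
which is exactly \eqref{conclusiontaprox2}.
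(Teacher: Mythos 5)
Your proposal is correct and follows essentially the same route as the paper: approximation via Theorem~\ref{firstaproxtheo} with $N=\widehat{N}_n$ and the identity $\phi(f,\mathbbm{1}_{\Omega})=2|f-\mathbbm{1}_{\Omega}|\le 2\cdot\mathbbm{1}_{f\neq\mathbbm{1}_{\Omega}}$, a fast-rate oracle inequality driven by the Bernstein condition $\mathbb{E}[\phi(f)^{2}]\le 2\,\mathcal{E}_{\phi}(f,C^{*})$ together with the entropy bound of Lemma~\ref{lemaentropy}, and the conversion via \eqref{ineqforhinge}. The only difference is presentational: the peeling-plus-Bernstein concentration step you sketch is precisely the content of the paper's Lemma~\ref{theoA1mod} (a reformulation of \cite[Theorem~A.1]{fastc}), which the paper invokes as a black box after verifying the same three conditions you verify.
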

In order to prove Theorem~\ref{secondproxtheo}, we use \cite[Theorem A.1]{fastc} with a slight adaptation to our hypothesis and obtain Lemma~\ref{theoA1mod} as an auxiliary result.

\begin{remark}\label{remark0}
	Theorem~\ref{secondproxtheo} implies
	\[
	\mathbb{E}_{S,\bm{x}}\left[\mathbbm{1}_{[1/2,1]}\left(\widehat{f}_{\phi,S}(\bm{x})\right)\neq \mathbbm{1}_{\Omega}(\bm{x})\right]\lesssim n^{-\gamma/(2+\gamma)}(1+\log n).
	\]
	Indeed, we denote for a moment  $ \Gamma_{S,\bm{x}}:=\left\{\mathbbm{1}_{[1/2,1]}\left(\widehat{f}_{\phi,S}(\bm{x})\right)\neq \mathbbm{1}_{\Omega}(\bm{x})\right\} $ and observe that \eqref{conclusiontaprox2} implies 
	\begin{align*}
		\mathbb{E}_{S,\bm{x}}\left[\Gamma_{S,\bm{x}}\right]&=\mathbb{E}_{S}\left[\mathbb{E}_{\bm{x}}\left[\Gamma_{S,\bm{x}}\right]\right]\\
		&\lesssim \mathbb{E}_{S}\left[\mathbb{E}_{\bm{x}}\left[\Gamma_{S,\bm{x}}\right]\mathbbm{1}_{\mathbb{E}_{\bm{x}}\left[\Gamma_{S,\bm{x}}\right]\gtrsim n^{-\gamma/(2+\gamma)}(1+\log n) }\right] + n^{-\gamma/(2+\gamma)}(1+\log n)\\
		&\leq  \mu\left(\mathbb{E}_{\bm{x}}\left[\Gamma_{S,\bm{x}}\right]\gtrsim n^{-\gamma/(2+\gamma)}(1+\log n)\right) +n^{-\gamma/(2+\gamma)}(1+\log n)\\
		&\lesssim n^{-1}  +n^{-\gamma/(2+\gamma)}(1+\log n)\\
		&\lesssim n^{-\gamma/(2+\gamma)}(1+\log n),
	\end{align*}
	here we use the fact that $ S $ was employed to define the probability measure $ \mu $ and therefore $ \mathbb{E}_{S}\left[X\right]=\int_{\Omega}Xd\mu $ holds for every random variable $ X $.\qed

\end{remark}
To close, we must emphasize that when $ \gamma $ is large we achieve in Theorem \ref{firstaproxtheo}, the approximation of a high-dimensional discontinuous function at a rate comparable to that of a low-dimensional smooth function. Moreover, we get in Theorem~\ref{secondproxtheo} and Remark~\ref{remark0}, a fast learning rate of $ \approx n^{-1}(1+\log n) $ if $ \gamma $ is large.

\subsection{Numerical experiments}\label{simult}

In this section, we present some numerical simulations that support the results of Theorems \ref{firstaproxtheo} and \ref{secondproxtheo}, in practical scenarios.  We build classifiers using TensorFlow in Python with different training samples and architectures that fulfill the hypotheses of our results. For a clear explanation of our algorithm, we detail each step below.

\subsubsection{Sample generation} We denote our samples as $ S_{d,n}^{\gamma} $ where  $ \gamma $ is the margin exponent in \ref{(M)cond},  $ d\in\{3,50,784\} $ is the dimension of $ \Omega:=\Omega^{d}\subset[0,1]^{d} $,  and $ n=|S_{d,n}^{\gamma}| $ is the number of examples. 
 
  Subsequently, we define the characteristics of our samples and the distance $ \mathrm{dist}\left(\bm{x},\partial\Omega^{d}\right) $ in order to apply the margin condition~\ref{(M)cond}.\\

\noindent\textbullet~ For $ d\in \{3,50\} $, we construct a data set where we choose the hypersurface $ \partial\Omega^{d} $ that separates the labels as the positive part of the hypersphere of dimension $d$ and radius $1/2$. Concretely, let  
\begin{align*}
	\mathcal{X}_{r}&:=\left\{\bm{x}\in [0,1]^{d}: \bm{x}\text{ has positive entries and } \norm{\bm{x}}\leq r\right\}~~ \text{and}\\\widetilde{\mathcal{X}}_{r}&:=\left\{\bm{x}/r:\bm{x} \in \mathcal{X}_{r}\right\}.
\end{align*}
To avoid problems with the volume of hypespheres in high dimensions for $ r<1 $, we start by taking $ r=4 $ and generating $ 3\times10^{5} $ examples in $ \mathcal{X}_{4} $ for training and $ 10^{6} $ for testing, making sure that $ |\mathcal{X}_{4}\setminus\mathcal{X}_{2}|=|\mathcal{X}_{2}| $.  However, our examples must be in $ [0,1]^{d} $, so we define them such that $ S_{d,n}^{\gamma}\subset \widetilde{\mathcal{X}}_{4} $ (see Figure~\ref{d_3}) and the distance between any $ \bm{x}\in S_{d,n}^{\gamma} $ and $ \partial\Omega^{d} $ as $ \mathrm{dist}\left(\bm{x},\partial\Omega^{d}\right):=|1/2-\norm{\bm{x}}| $.

\begin{figure}[H]
	\centering
	\scalebox{0.46}
	{\includegraphics{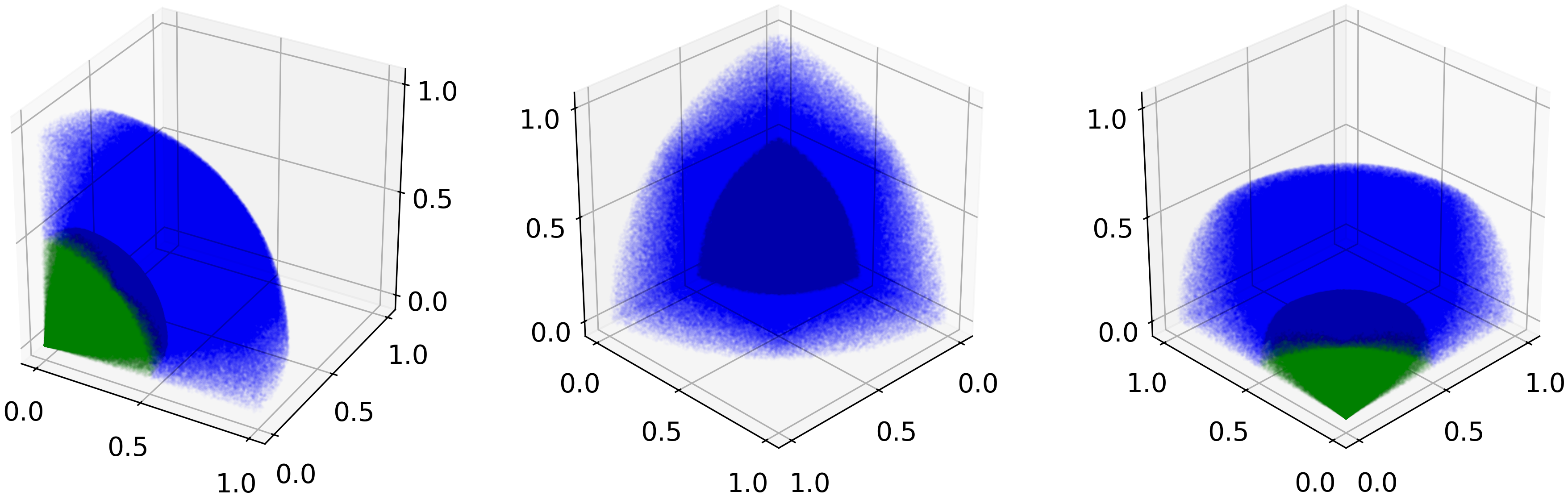}}
	\caption{Examples test for $ d=3 $.}
	\label{d_3}
\end{figure}

\noindent\textbullet~ For $ d=784 $, we focus on an example based on real data. Therefore, we use the MNIST data set to have images of size $ 28\times28 $ pixels, and we classify the images of numbers $ 0 $ and $ 1 $. Since MNIST does not have a large number of elements and we want several samples to observe the effect of the margin condition~\ref{(M)cond} and the number of elements, we use the SMOTE \cite{smote} technique (similar to the method used in \cite{fastc} to generate artificial samples) to equalize the number of elements in each label and increase the sample of each label to $95\%$ more than the larger original, obtaining in total $ 18576 $ examples for training and $ 12146 $ for test (see Figure~\ref{smotef}).
\begin{figure}[H]
	\centering
	\scalebox{0.363}
	{\includegraphics{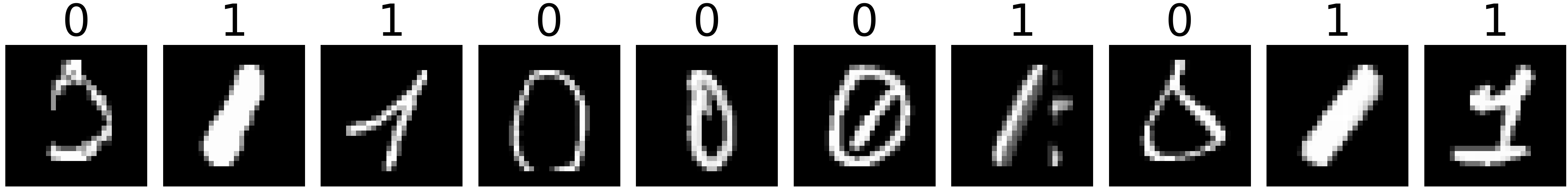}}
	\caption{Some examples after smote for $ d=784 $.}
	\label{smotef}
\end{figure}

In this sense, we define $ S_{784,n}^{\gamma} $ such that its elements are vectors in $ [0,1]^{784} $ constructed from matrices in $ \mathbb{R}^{28\times28} $ flattened and with entries normalized to $ [0,1] $ representing images of $ 0 $ and $ 1 $. However, to apply the margin condition~\ref{(M)cond} we must have an idea of the distance between the examples and the decision boundary, so we use an initial NN\footnote{$ \Phi_{0} $ is a ReLU-NN with architecture $ (784,256,128,64,1) $, which was obtained using hinge loss and sigmoid function is applied to the output.} $ \Phi_{0} $ that assigns probabilities to each example and form a vector $ \bm{w}=(w_{i})_{i=1}^{n} $ with these predictions (and keeping only the well predicted examples), i.e.~$ w_{i}=\Phi_{0}(\bm{x}) $ for some $ \bm{x}\in S_{784,n}^{\gamma} $ and $ i\in\{1,\ldots,n\} $, where $ \mathbbm{1}_{[1/2,1]}\left(\Phi_{0}(\bm{x})\right) $ corresponds to the original label of $ \bm{x} $. Then, $ \bm{w} $ gives us an idea of the position of the examples and the possible decision boundary, using their probabilities in the interval $ [0,1] $.  Thus, we assume for a moment as the decision boundary, the set of $ \bm{x}\in [0,1]^{784} $ with $ \Phi_{0}(\bm{x}) \in [\ell,u] $ for some $ \ell,u\in (0,1) $ such that $ \left[\min_{{i\in\{1,\ldots,n\}}}\{w_{i}\},\ell\right)\cup\left(u,\max_{{i\in\{1,\ldots,n\}}}\{w_{i}\}\right]  $ contains approximately $ 99.9 \% $  of the labels $ \{w_{i}\}_{i=1}^{n} $ (see Figure~\ref{dboundry784}).
\begin{figure}[H]
	\centering
	\scalebox{0.32}
	{\includegraphics{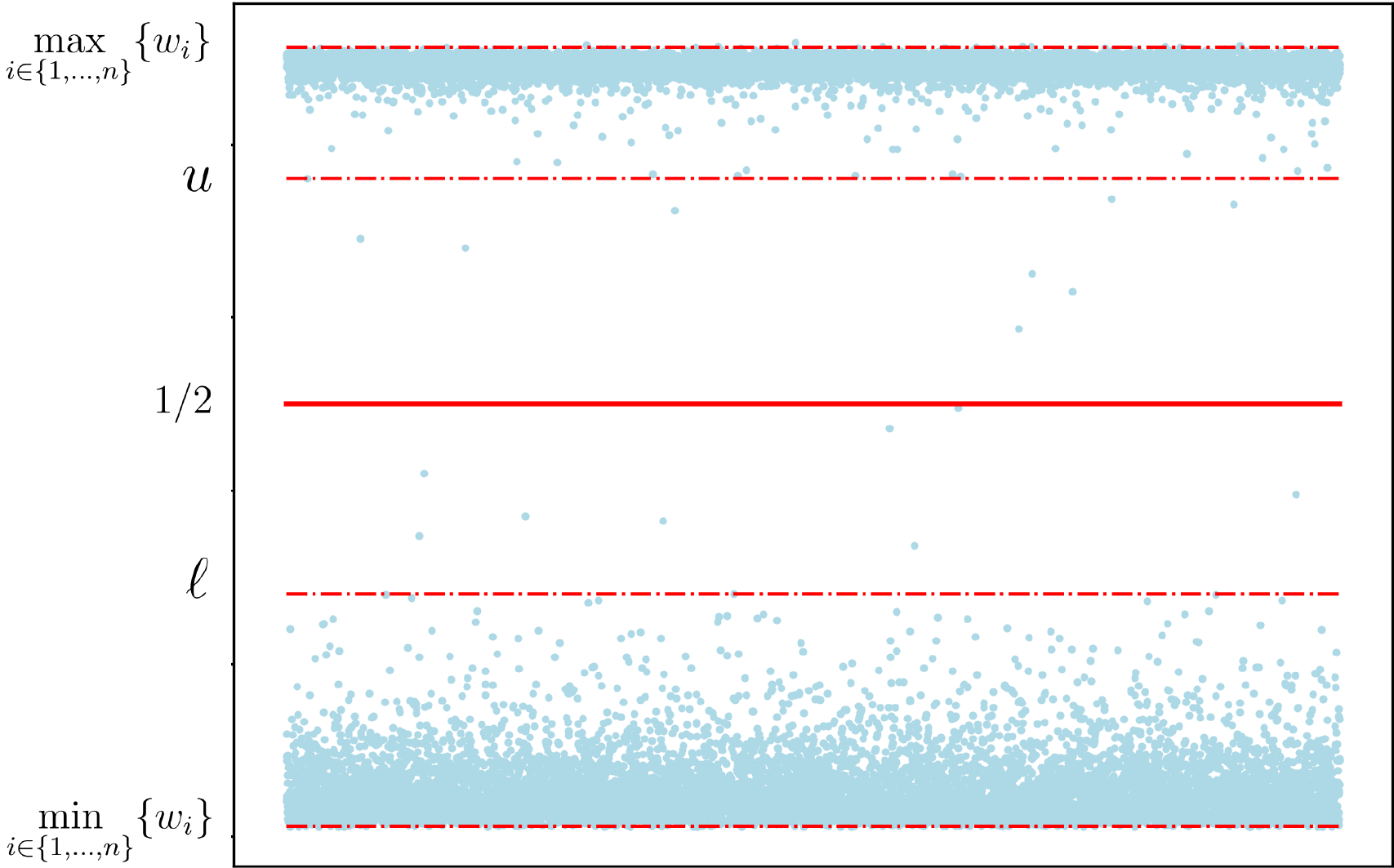}}
	\caption{Train sample probability applying $ \Phi_{0} $ and decision boundary for $ d=784 $.}
	\label{dboundry784}
\end{figure}

Next, we remove the $ 0.1\% $ of elements with label in $  [\ell,u]  $ and rescale the values of $ \bm{w} $ with the mapping $ t:[0,1]^{784}\to [0,1] $ that maps $ \min_{{i\in\{1,\ldots,n\}}}\{w_{i}\}$ to $0 $;
$ \ell,u$ to $1/2 $; and $ \max_{{i\in\{1,\ldots,n\}}}\{w_{i}\}$ to $1 $. 
Finally, after this process, we obtain $ 18337 $ elements for the training set (see Figure~\ref{Final_train_784}) and $ 12053 $ for the test set, with the same number of elements for each label.
To simulate a margin condition, we require, for a given point $ \bm{x}\in S_{784,n}^{\gamma} $, its distance to the decision boundary  $ \partial\Omega^{784} $. This is computationally very hard to identify. 
Therefore, we use the expression $|1/2-t(\Phi_{0}(\bm{x}))|$ as a proxy for the distance, since
\begin{equation}
	\mathrm{dist}\left(\bm{x},\partial\Omega^{784}\right) \gtrsim |1/2-t(\Phi_{0}(\bm{x}))| \quad\text{for all}\quad i\in \{1,\ldots,n\}.\label{disMNIST}
\end{equation}
This proxy is reasonable because of the following considerations:
\begin{enumerate}[label=$\ast$]
	\item $ \partial\Omega^{784}=\partial\{\bm{x}\in[0,1]^{d}:\eta(x)\approx t(\Phi_{0}(\bm{x})) \geq 1/2\} $.
	\item  $ t(\Phi_{0}(\bm{x})) $  is a modified sigmoid function of the initial sigmoid  $ \Phi_{0}(\bm{x}) $.
	\item  $t(\Phi_{0}(\bm{x}))=1/(1+e^{-\left(\left\langle\widehat{\bm{w}},\bm{x}\right\rangle+b\right)}) $ for some weight vector $ \widehat{\bm{w}} $ and bias $ b $, after they have been modified by the use of the $ t $ function.
	\item $ G:=\left\{\bm{x}\in[0,1]^{d}:  \left\langle\widehat{\bm{w}},\bm{x}\right\rangle+b=0 \right\} $ is the projection of the decision boundary in the characteristics space, therefore we assume
	\[
	\mathrm{dist}\left(\bm{x},\partial\Omega^{784}\right)\approx \mathrm{dist}\left(\bm{x},G\right)= |\left\langle\widehat{\bm{w}},\bm{x}\right\rangle+b|/\norm{\widehat{\bm{w}}}.
	\]
	\item Approximating $ 1/(1+e^{-y}) $ around $ y=0 $, implies that $ 1/(1+e^{-y})\approx 1/2 + y/4 $. So, when $ y=\left\langle\widehat{\bm{w}},\bm{x}\right\rangle+b $, we obtain 
	\[
	|1/2-t(\Phi_{0}(\bm{x}))|\approx |\left\langle\widehat{\bm{w}},\bm{x}\right\rangle+b|/4\lesssim \mathrm{dist}\left(\bm{x},\partial\Omega^{784}\right).
	\]
	In conclusion, to apply the margin condition~\ref{(M)cond} in this case, we use $ |1/2-t(\Phi_{0}(\bm{x}))| $ instead of the exactly $ \mathrm{dist}\left(\bm{x},\partial\Omega^{784}\right) $, which is a stronger assumption.
\end{enumerate}
\begin{figure}[H]
	\centering
	\scalebox{0.7}
	{\includegraphics{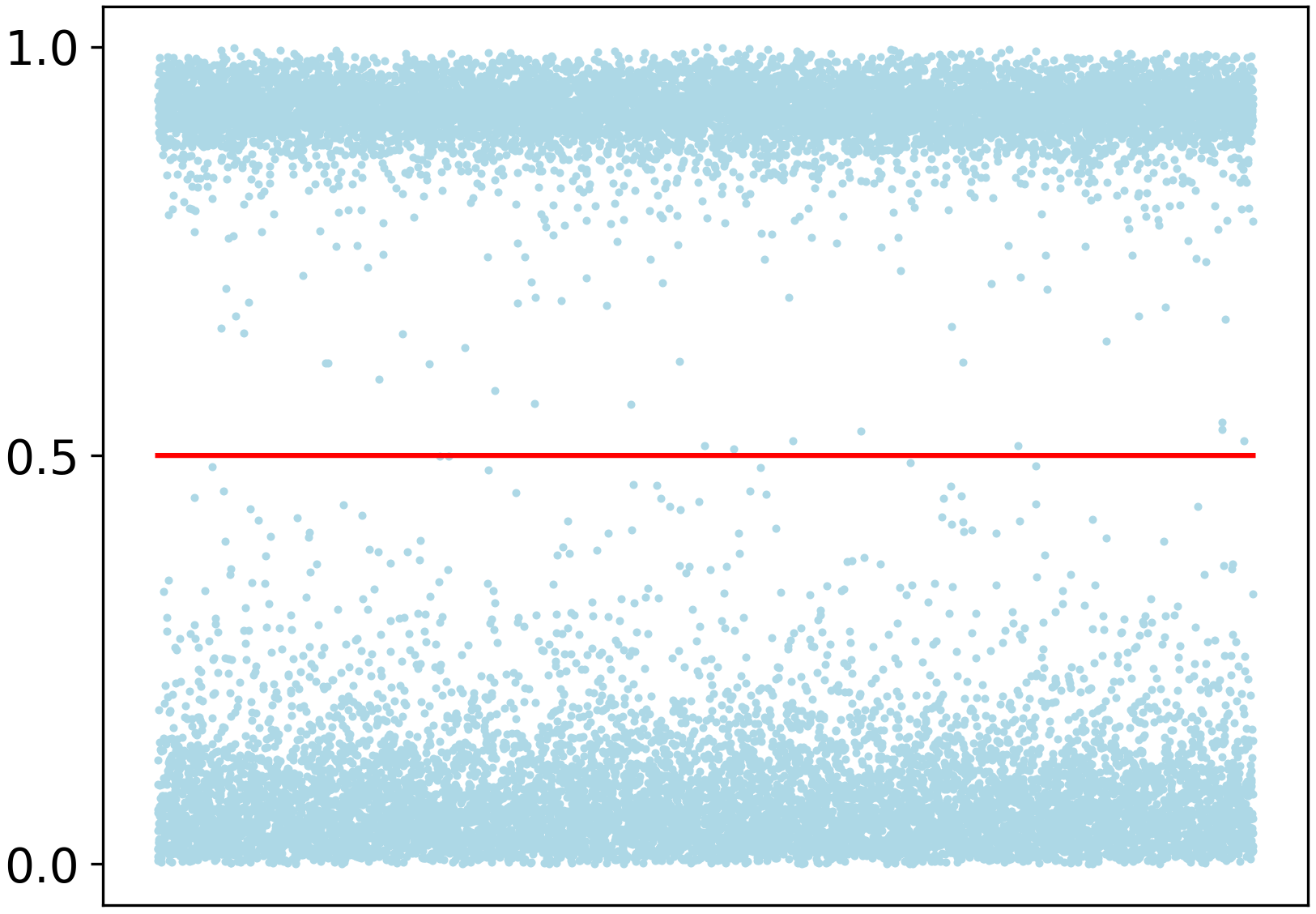}}
	\caption{Final train sample after SMOTE and location of the possible $ \partial\Omega^{784} $.}
	\label{Final_train_784}
\end{figure}

To the basis sets defined in the previous items to describe the characteristics of the elements of $ S_{d,n}^{\gamma} $, we apply the margin condition~\ref{(M)cond} with margin exponent 
\[
\gamma\in \Gamma:=\{0.1, 0.644, 1.189, 1.733, 2.278, 2.822, 3.367, 3.911, 4.456, 5.0\},
\]
defining a neighborhood centered at $ 0.5 $ of radius $ c_{d}=0.48 $ when $ d\in\{3,50\} $ or $ c_{d}\approx 0.5-2.8\cdot10^{-7} $ for $ d=784 $, and assigning to each point a weight of $ 0 $ if it is outside this neighborhood or $ 1-(\mathrm{dist}\left(\bm{x},\partial\Omega^{d}\right)/c_{d})^{\gamma}  $ if it is not. So, we randomly decide to remove from the sample with probability according to the weight.  Let
\begin{small}
		\begin{align*}
		G_{1}:=\,&\{499,	730,	1065,	1556,	2271,	3317,	4843, 7071,	10323,	15073,	22007, 32130,	46911,	68492\},\\
		G_{2}:=\,&\{249,	321,	414,	533,	687,	885,	1140,	1468,	1890,	2435,	3135,	4038,	5200, 6696,	8624\}.
	\end{align*}
\end{small}
Then, for each $ \gamma $ we randomly select subsets of the samples, such that 
\begin{align*}
	n\in G(d):=\begin{cases}
		G_{1}\cup \{120001\}&\text{ if } d=3\\
		G_{1}\cup \{120088\}&\text{ if } d=50\\
		G_{2}&\text{ if } d=748
	\end{cases}\qquad \text{and}\qquad 
	 n= n_{d}:=\begin{cases}
		399644&\text{ if } d=3\\
		399187&\text{ if } d=50\\
		4837&\text{ if } d=748,
	\end{cases}
\end{align*}
for train and test, respectively  (see e.g. Figures \ref{margin-9} and \ref{d3margin}) .

\begin{figure}[H]
	\centering
	\scalebox{0.285}
	{\includegraphics{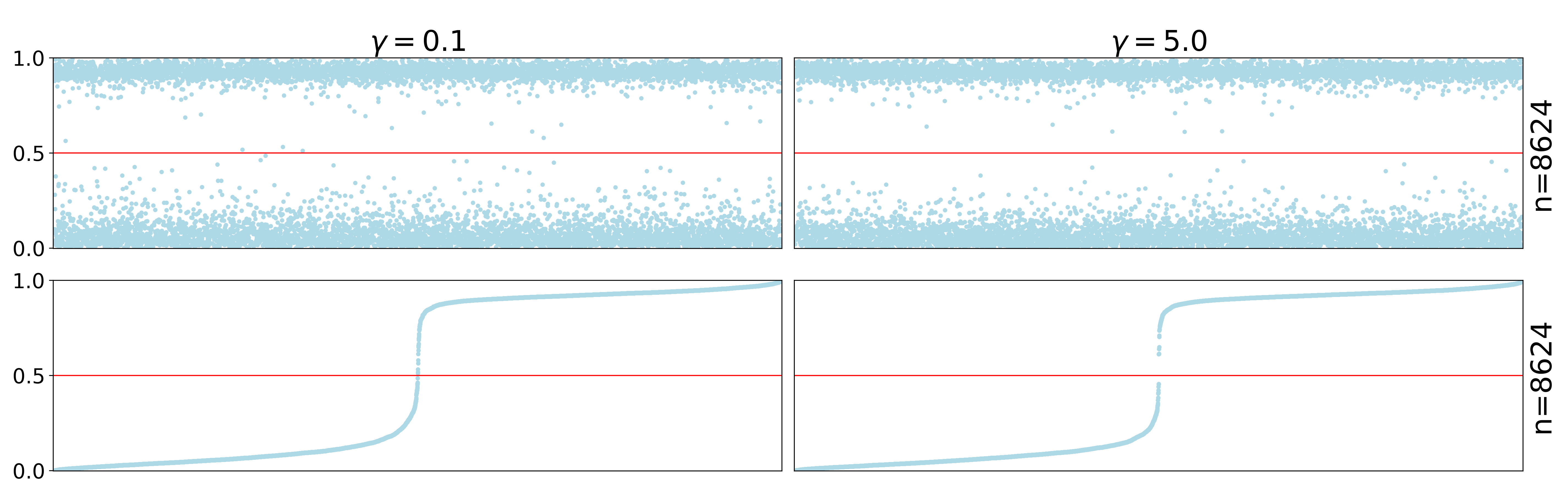}}
	\caption{Example of margin condition applied for $ d=784 $.
		The second row contains the points of the first row but in ascending order.}
	\label{margin-9}
\end{figure}
\begin{figure}[H]
	\centering
	\scalebox{0.68}
	{\includegraphics{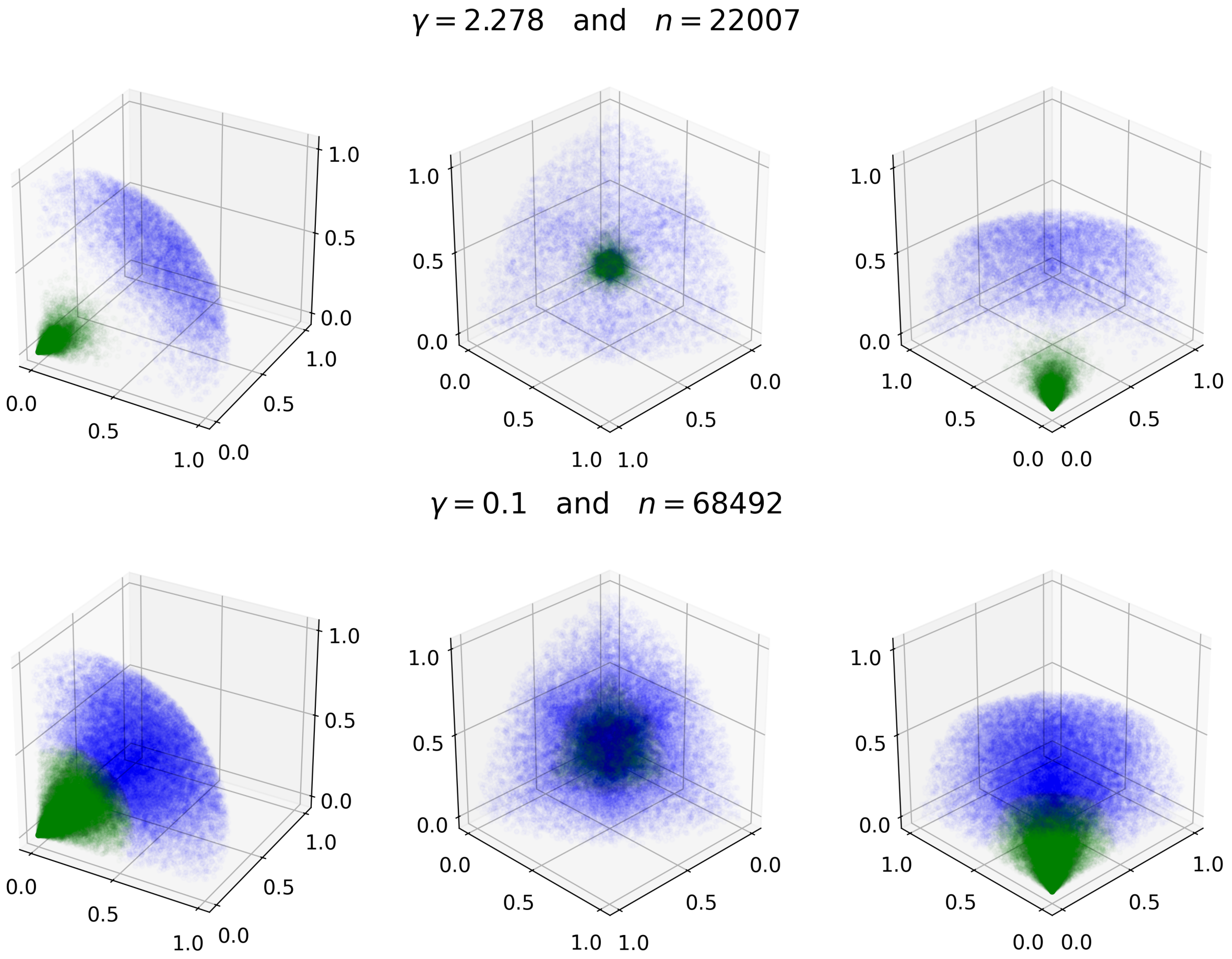}}
	\caption{Example of margin condition applied for $ d=3 $.}
	\label{d3margin}
\end{figure}

Lastly, we repeat the process up to $ 61, 57, 200  $ iterations for $d=3,50,784$,  respectively, and obtain different samples to train and test, which will then be used to have an average result.

\subsubsection{NN training}

With the $ S_{d,n}^{\gamma,\text{train}}\in \left\{S_{d,n}^{\gamma}\right\}_{n\in G(d)} $ samples previously generated as training sets, we train ReLU-NNs with architecture
\[
(d,3N,2N,N,1) \quad \text{where}\quad N=\lceil n^{2/(\gamma+2)}\rceil,
\]
using the hinge loss $ \phi $ and applying the sigmoid function to the output, we also use the optimizer Adam with a learning rate of $ 1e-5 $, we apply early stop monitoring loss with patience equal to 1 for 50000 epochs, and we do not use validation sets as it is usually done, since our main generalization result does not describe this situation, but the errors minimizerof the training risk. 
Next, we compute the empirical $ 0-1 $ risk  as in \eqref{empirisk} and \eqref{notatloss0}, using the samples  $  S_{d,n}^{\gamma,\text{test}}\in \left\{S_{d,n}^{\gamma}\right\}_{n=n_{d}} $ previously defined as test sets, and obtain  an approximation of
\[
\mathbb{E}_{S_{d,n}^{\gamma,\text{train}},S_{d,n}^{\gamma,\text{test}}}\left[\mathbbm{1}_{[1/2,1]}\circ\widehat{f}_{\phi,S_{d,n}^{\gamma,\text{train}}}\neq \mathbbm{1}_{\Omega}\right] 
\]
as in the Remark~\ref{remark0}.  Moreover, we computed the empirical $\phi$-risk to make the interpretation\footnote{This interpretation is valid to support our results because it is known that $ \mathcal{E}(f)\lesssim \mathcal{E}_{\phi}(f) $ for any $f\in \mathcal{F}$, when $ C^{*}:=\mathbbm{1}_{\Omega} $ is chosen. (see \eqref{ineqforhinge} and \eqref{equalsphi01risk}).} of our results easier, since empirical  $0-1$ risk is slightly less stable. For each $ d\in \{3,50,784\} $ and $ \gamma\in \Gamma $, our results can be seen in Figures \ref{784} and \ref{784_01}, for the empirical $\phi$-risk and  $0-1 $ risk, respectively.

\begin{figure}[H]
	\centering
	\scalebox{0.35}
	{\includegraphics{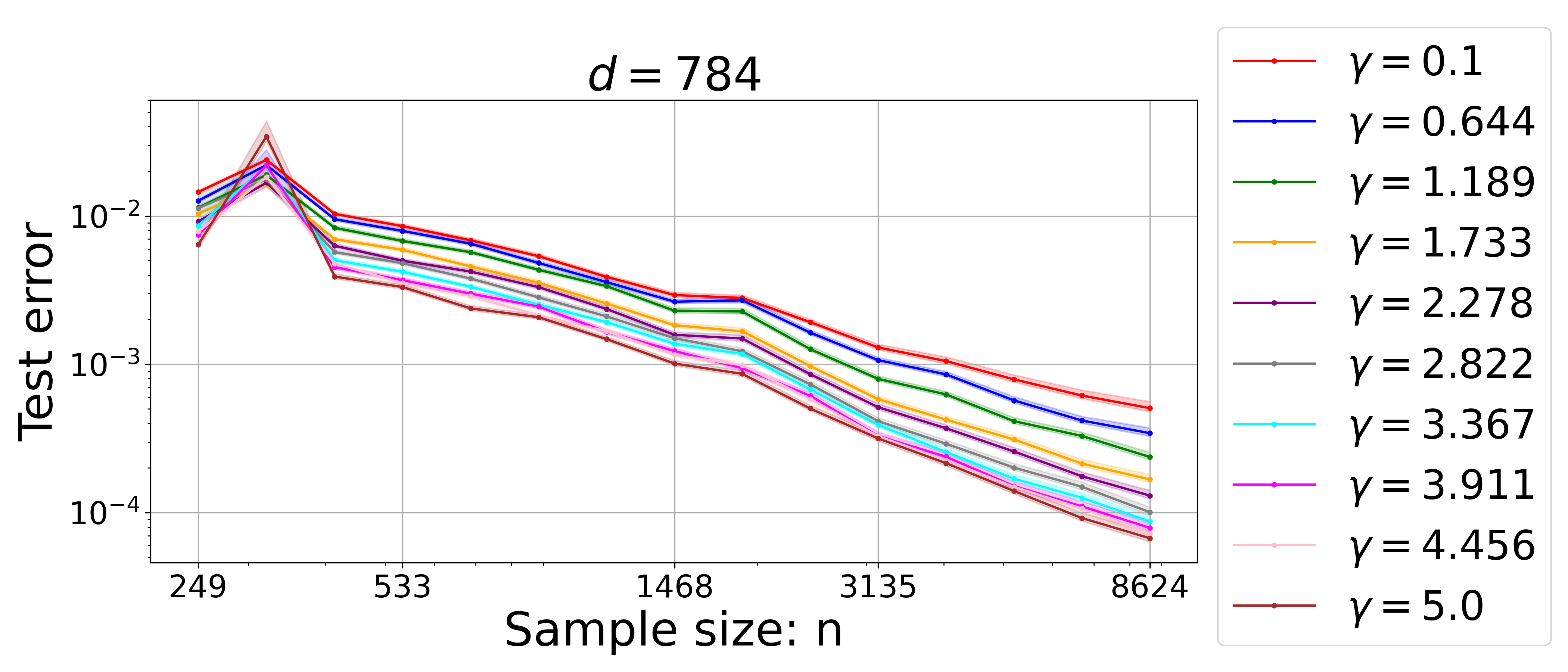}}
	\scalebox{0.55}
	{\includegraphics{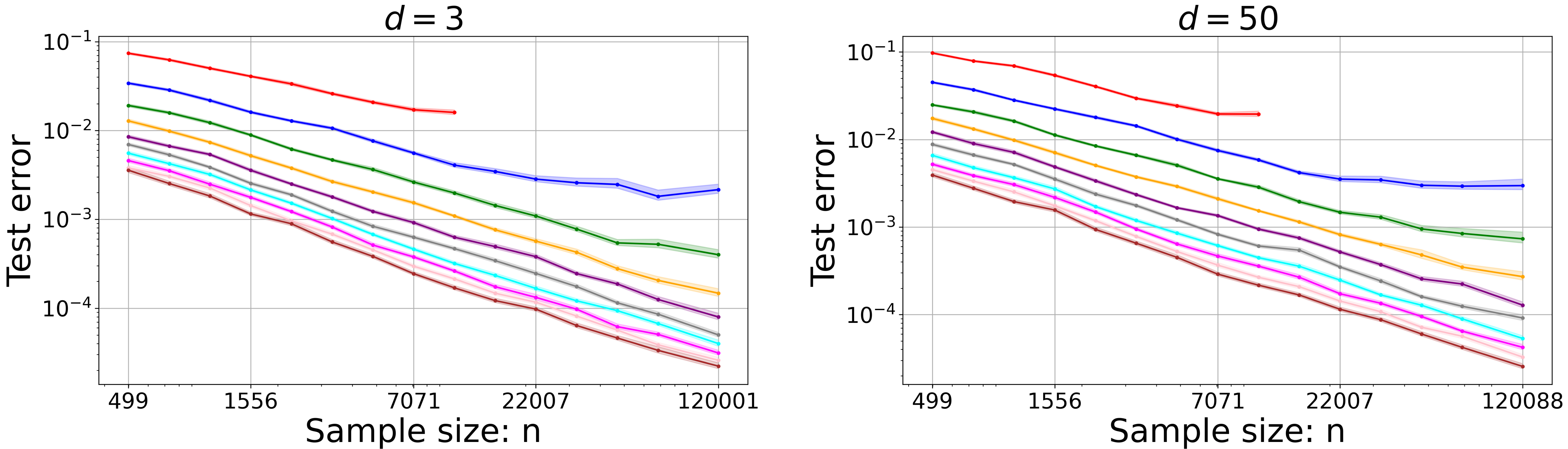}}
	\caption{Plot $ n $ vs empirical  $ \phi $-risk.  We averaged over 61,57,200 iterations in the 3,50,784 cases, respectively. The error shadow shows one relative standard deviation.}
	\label{784}
\end{figure}

\noindent\textbullet~ The $ (d,\gamma)\in \{3,50\}\times\{0.1\} $ cases could only be computed as far as seen in Figure~\ref{784} because the memory requirements of the data and architecture became excessive.\\

\noindent\textbullet~  We repeated the experiments obtaining different $ S_{d,n}^{\gamma,\text{train}}$ and $S_{d,n}^{\gamma,\text{test}} $ in several iterations, then applied mean on all these results and plotted an error shadow based on the standard deviation of the results in these iterations. 

In conclusion the simulations support our results (Theorem \ref{secondproxtheo} and Remark \ref{remark0}). Furthermore, Figure~\ref{784} shows that the rate approaches $n^{-1}$ for large margin and does not for small margin. The $\gamma = 5.0$ curve spans close to two orders of magnitude.  The number of samples spans $1.5$ orders of magnitude, so the error rate is close to or even slightly exceeding the rate $n^{-1}$. 
The case $\gamma = 0.1$, yields a reduction of the loss by roughly one order of magnitude, so the associated error rate is worse than $n^{-1}$.  For the other examples, the $ x $ axis goes through 2.4 orders of magnitude. The curves with the highest margin also manage a bit more than 2 orders of magnitude. The ones with lower margin (green for example) only manage 1, so closer to the $n^{-1/2}$ rate.

\begin{figure}[H]
	\centering
	\scalebox{0.33}
	{\includegraphics{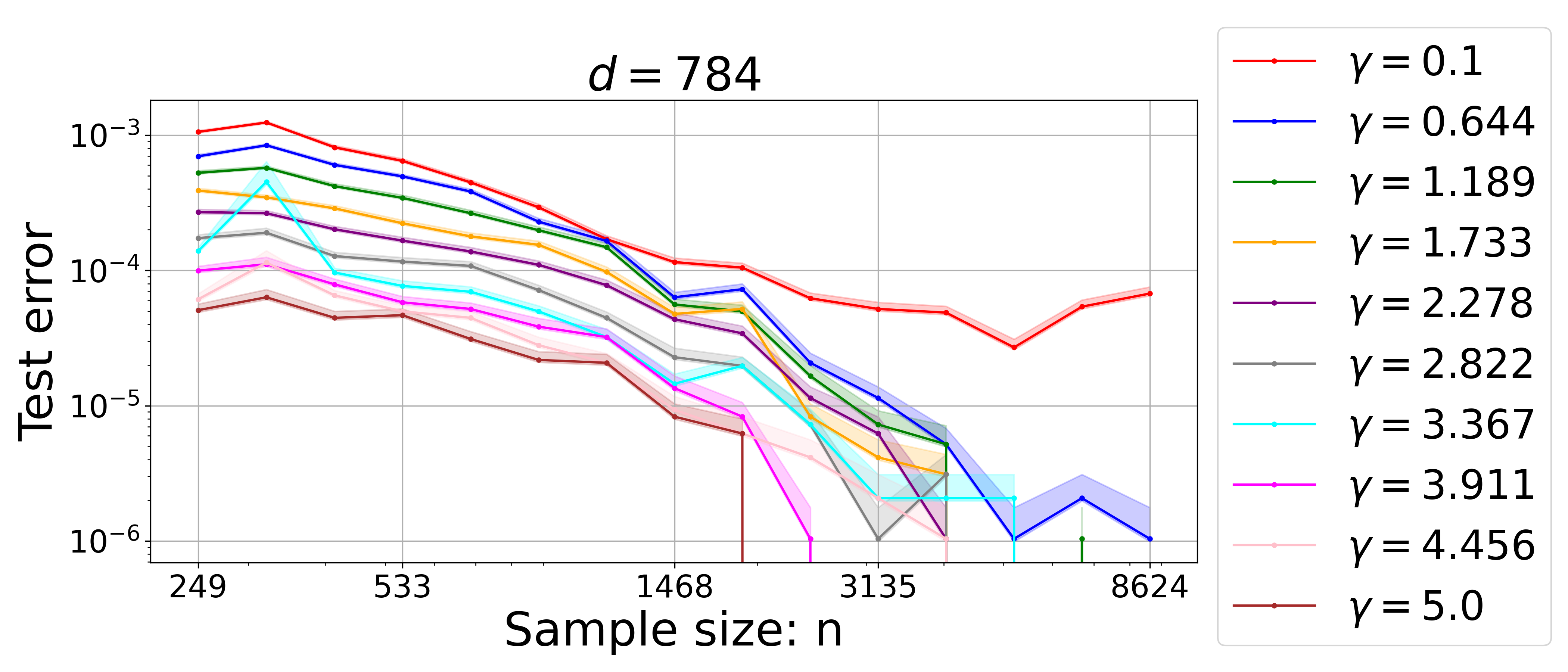}}
	\scalebox{0.53}
	{\includegraphics{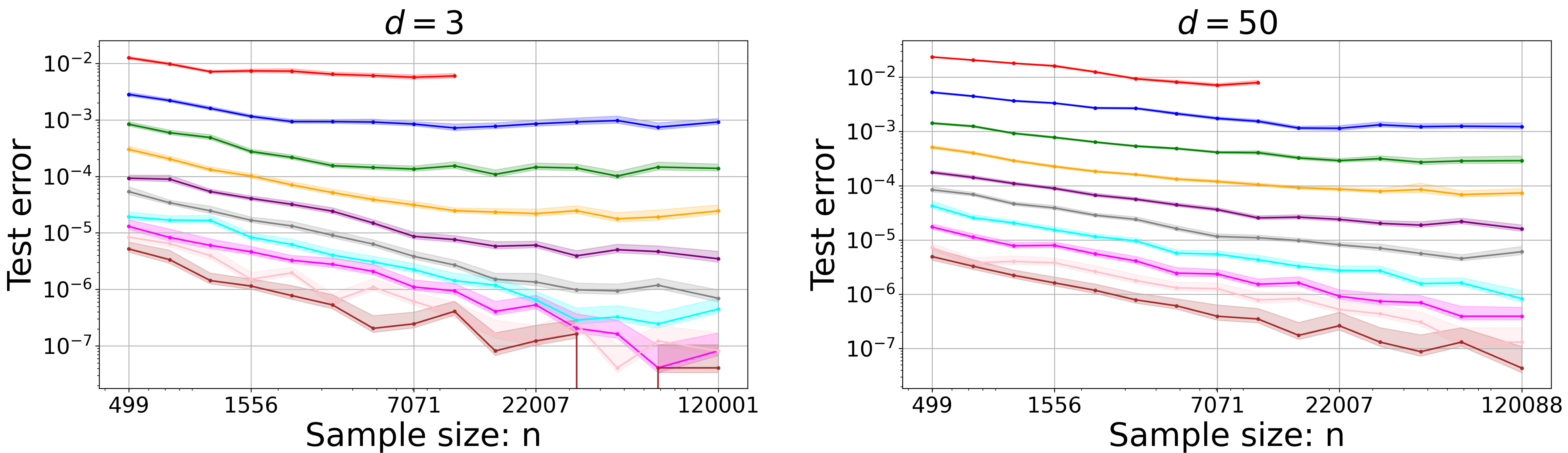}}
	\caption{Plot $ n $ vs empirical $ 0-1 $ risk.  We averaged over 61,57,200 iterations in the 3,50,784 cases, respectively. The error shadow shows one relative standard deviation.}
	\label{784_01}
\end{figure}

Figure~\ref{784_01} supports the interpretation made for Figure~\ref{784} and, as expected, the empirical 0-1 risks are even lower than the corresponding  empirical $\phi$-risks. 

\section*{Acknowledgements}

J.G. and P.C.P. were supported by the Austrian Science Fund (FWF) Project P-37010.


\bibliography{margincond}

\appendix

\section{Appendix}

\subsection{Proof of Theorem~\ref{firstaproxtheo}}
Since $ \Omega\in \mathcal{R}_{\mathcal{BA}_{C_1}}(d,M) $, there exists an associated cover of $ \Omega $, called  $ \{Q_{i}\}_{i=1}^{M} $, such that property \eqref{coveras} is satisfied for each $ i\in\{1,\ldots,M\} $, with $ P_{i}(\bm{x})=(x_{i1},\ldots,x_{id}) $ where $ x_{i1},\ldots,x_{id} $ are the entries of vector $ \bm{x} $ after being permuted via $ P_i $, and $ g_{i}(P_{i}(\bm{x}))=\mathbbm{1}_{b_{i}(x_{i1},\ldots,x_{i(d-1)})\leq x_{id}} $ for some $ b_{i}\in \mathcal{BA}_{C_1}\subset C([0,1]^{d-1};[0,1]) $. Note that property~\eqref{barroncondition0}  is fulfilled even for $ f_{i}\in \mathcal{BA}_{C_1} $ defined as
\[
f_{i}(\bm{x}^{(i)}):=f_{i}(x_{1},\ldots,x_{i-1},x_{i+1},\ldots,x_{d})=b_{i}(x_{i1},\ldots,x_{i(d-1)})  \quad\text{and}\quad  x_{i}:=x_{id},
\]
given that it has been shown in \cite[A.2]{petersen2021optimal} that in this case $ f_{i} $ can be seen as a composition of $ b_{i} $ with permutations of the vector $ \bm{x}^{(i)} $. Therefore, fixing $ m\in \{1,\ldots,M\} $, we have
\[
\mathbbm{1}_{Q_{m}\cap\Omega}(\bm{x})=\mathbbm{1}_{x_{i}\leq f_{m}(\bm{x}^{(i)})}\quad \text{or}\quad \mathbbm{1}_{Q_{m}\cap\Omega}(\bm{x})=\mathbbm{1}_{x_{i}\geq f_{m}(\bm{x}^{(i)})}\quad \text{a.e.  on}\quad Q_{m}.
\]
Without loss of generality (otherwise the proof is analogous) we say 
	\[
	\mathbbm{1}_{Q_{m}\cap\Omega}(\bm{x})=\mathbbm{1}_{x_{i}\leq f_{m}(\bm{x}^{(i)})}~\text{ for }~Q_{m}:=\prod_{i=1}^{d}[a_{i},b_{i}]\subseteq [0,1]^{d}~\text{ and }~\bm{x}^{(i)}\in Q_{m}\setminus\prod_{\underset{j\neq i}{j=0}}^{d}[a_{j},b_{j}].
	\] 
By inequality \eqref{defbarronaproxset} and as $ f_{i}\in \mathcal{BA}_{C_1}  $, there exists a shallow ReLU-NN $ {\Phi}_{m} $ such that 
\begin{equation}
	{\norm{f_{m}-{\Phi}_{m}}}_{\infty}\leq \delta:= C_{1}\sqrt{(d-1)/N}\label{ineqRSNNT2}
\end{equation}
and all weights and biases of ${\Phi}_{m}$ are bounded in absolute value by $ 7\sqrt{C_{1}} $.

Let $ G_{m}:=\left\{\bm{x}\in Q_{m} : \mathbbm{1}_{Q_{m}\cap\Omega}(\bm{x})\neq \mathbbm{1}_{x_{i}\leq {\Phi}_{m}(\bm{x}^{(i)})} \right\} $ and observe that 
\begin{align*}
	G_{m}=\,&\left\{\bm{x}\in Q_{m} : \mathbbm{1}_{x_{i}\leq f_{m}(\bm{x}^{(i)})}\neq \mathbbm{1}_{x_{i}\leq {\Phi}_{m}(\bm{x}^{(i)})} \right\}\\
	=\,&\left\{\bm{x}\in Q_{m} : x_{i}-f_{m}(\bm{x}^{(i)})\leq 0\right\}\triangle \left\{\bm{x}\in Q_{m} : x_{i}-f_{m}(\bm{x}^{(i)})\leq {\Phi}_{m}(\bm{x}^{(i)})-f_{m}(\bm{x}^{(i)})\right\}\\
	\subseteq\,& \left\{\bm{x}\in Q_{m} : |x_{i}-f_{m}(\bm{x}^{(i)})|\leq |{\Phi}_{m}(\bm{x}^{(i)})-f_{m}(\bm{x}^{(i)})|\leq \delta\right\},
\end{align*}
where $ \triangle $ denotes the symmetric difference of sets. If $ \bm{x}=\left(x_{1},\ldots,x_{d}\right)\in G_{m} $ and $ \bm{z}^{*}=\left(x_{1},\ldots,x_{i-1},f_{m}(\bm{x}^{(i)}),x_{i+1},\ldots,x_{d}\right)$ is in the decision boundary  $ \partial\Omega $, we get 
\begin{equation}
	\mathrm{dist}({\bm{x},\partial\Omega})=\inf_{\bm{x}^{*}\in \partial\Omega}\norm{\bm{x}-\bm{x}^{*}}_{2}\leq\norm{\bm{x}-\bm{z}^{*}}_{2} = |x_{i}-f_{m}(\bm{x}^{(i)})|\leq \delta,\label{eqdist}
\end{equation}
therefore $ \bm{x}\in B_{\delta}^{*} $; this implies $ G_{m} \subseteq B_{\delta}^{*} $. As the margin condition~\ref{(M)cond} is satisfied with $ \gamma>0 $, we have 
\begin{equation}
	\mu(G_{m})\leq \mu\left(B_{\delta}^{*}\right)\leq C_{2}\delta^{\gamma}\label{mG}.
\end{equation}
Now, we define the ReLU-NN
\begin{align*}
	&\widetilde{\Phi}_{m}(\bm{x}):=H_{\delta}\left(\Phi_{m}(\bm{x}^{(i)})-x_{i}\right), &\text{where}\quad H_{\delta}(x)&:=\begin{cases}
		0 & \text{if } x\leq 0  \\
		x/\delta & \text{if } 0\leq x\leq \delta \\
		1 & \text{if } x\geq \delta  
	\end{cases}\\&&&\,= \frac{1}{\delta}\left(\varrho(x)-\varrho(x-\delta)\right).
\end{align*}
We also note that if 
\begin{align*}
	\bm{x}\in\widetilde{G}_{m}:=\left\{\bm{x}\in Q_{m} : \mathbbm{1}_{x_{i}\leq {\Phi}_{m}(\bm{x}^{(i)})} \neq \widetilde{\Phi}_{m}(\bm{x}) \right\}	
	=\left\{\bm{x}\in Q_{m} : 0\leq\Phi_{m}(\bm{x}^{(i)})-x_{i}<\delta \right\},
\end{align*}
then
\[
|x_{i}-f_{m}(\bm{x}^{(i)})|-|f_{m}(\bm{x}^{(i)})-\Phi_{m}(\bm{x}^{(i)})|\leq |\Phi_{m}(\bm{x}^{(i)})-f_{m}(\bm{x}^{(i)})+f_{m}(\bm{x}^{(i)})-x_{i}|<\delta.
\]
Therefore, the above inequality and \eqref{ineqRSNNT2} imply that 
\[
|x_{i}-f_{m}(\bm{x}^{(i)})|\leq |f_{m}(\bm{x}^{(i)})-\Phi_{m}(\bm{x}^{(i)})|+\delta<2\delta,
\]
and with the argument of \eqref{eqdist} we obtain $ \bm{x}\in B_{2\delta}^{*}  $; whereby $ \widetilde{G}_{m} \subseteq B_{2\delta}^{*} $. So, by \ref{(M)cond},
\begin{equation}
	\mu(\widetilde{G}_{m})\leq \mu(B_{2\delta}^{*})\leq C_{2}(2\delta)^{\gamma}.\label{mGtilde}
\end{equation}

So far we constructed the sets $ G_{m} $ and $ \widetilde{G}_{m} $ to approximate the indicator function $ \mathbbm{1}_{\Omega} $ with the ReLU-NN $ \Phi_{m} $, only on the set $ Q_{m} $. In the following, we join these approximations and find another approximation for the same indicator function but on all $ \Omega $. To this end, we define 
	\[
	\widehat{\Phi}_{m}\left(\bm{x},\widetilde{\Phi}_{m}(\bm{x})\right):=\begin{cases}
		0 & \text{if }  \exists i: 0<b_{i}-a_{i}<2\widehat{\delta}\\
		\varrho\left( \overunderset{d}{i=1}{\sum} t_{i}(x_{i})+\varrho(\widetilde{\Phi}_{m}(\bm{x}))-d\right) & \text{otherwise,}
	\end{cases}
	\]
with $ \widehat{\delta}:=\delta^{\gamma/\alpha} $,
\begin{equation}
	t_{i}(u):=\begin{cases}
		0 & u\in[0,1]\setminus [a_{i},b_{i}]\\
		1 & u\in [a_{i}+\widehat{\delta},b_{i}-\widehat{\delta}]\\
		{(u-a_{i})}/{\widehat{\delta}} & u\in [a_{i},a_{i}+\widehat{\delta}]\\
		{(b_{i}-u)}/{\widehat{\delta}} & u\in [b_{i}-\widehat{\delta},b_{i}];
	\end{cases}\label{eqdeftid}
\end{equation}
and 
\[
\widehat{G}_{m}:=\left\{\bm{x}\in [0,1]^{d}: \widehat{\Phi}_{m}\left(\bm{x},\widetilde{\Phi}_{m}(\bm{x})\right)\neq \mathbbm{1}_{Q_{m}}\cdot \widetilde{\Phi}_{m}(\bm{x})\right\}.
\]
Note that when there exists $  i\in\{1,\ldots,d\} $ such that $ 0<b_i-a_i<2\widehat{\delta} $, we get $ Q_{m}\subset T_{\frac{1}{2}(b_i-a_i)+a_i,\widehat{\delta}}^{(i)} $, where $ T_{\frac{1}{2}(b_i-a_i)+a_i,\widehat{\delta}}^{(i)} $ is a tube as in \eqref{tube}, and therefore the tube compatibility of $ \mu $ implies that 
\begin{align}
	\mu(\widehat{G}_{m})&=\left(\left\{\bm{x}\in [0,1]^{d}: \mathbbm{1}_{Q_{m}}\cdot \widetilde{\Phi}_{m}(\bm{x})\neq 0\right\}\right)\nonumber\\
	&\leq \mu(Q_{m})\nonumber\\
	&\leq \mu\left(T_{\frac{1}{2}(b_i-a_i)+a_i,\widehat{\delta}}^{(i)}\right)\nonumber\\
	&\leq C_{3}\widehat{\delta}^{\alpha}.\label{eqbiai2del}
\end{align}
If for all $ i\in \{1,\ldots,d\} $, $ b_{i}-a_{i}\geq 2\widehat{\delta} $ is satisfied, then
\begin{itemize}
	\item If $ \bm{x}\in [0,1]^{d}\setminus Q_{m} $, exists $ x_{i} $ such that $ x_{i}\notin [a_i,b_i] $ and 
	\begin{equation}
		\sum_{i=1}^{d}t_{i}(x_{i})+\varrho(\widetilde{\Phi}_{m}(\bm{x}))-d\leq \varrho(\widetilde{\Phi}_{m}(\bm{x}))-1\leq 0,\label{eqphi0}
	\end{equation} 
	this implies that $ \widehat{\Phi}_{m}\left(\bm{x},\widetilde{\Phi}_{m}(\bm{x})\right)=0=\mathbbm{1}_{Q_{m}}\cdot \widetilde{\Phi}_{m}(\bm{x}) $.
	
	\item If $ \bm{x}\in\prod_{i=1}^{d}[a_i+\widehat{\delta},b_i-\widehat{\delta}] $, 
	\[
	\widehat{\Phi}_{m}\left(\bm{x},\widetilde{\Phi}_{m}(\bm{x})\right)=\varrho\left(\varrho(\widetilde{\Phi}_{m}(\bm{x}))\right)=\mathbbm{1}_{Q_{m}}\cdot \widetilde{\Phi}_{m}(\bm{x}).
	\]	
\end{itemize}
Then we conclude that $ \widehat{G}_{m} \subset Q_{m}\setminus\prod_{i=1}^{d}[a_{i}+\widehat{\delta},b_{i}-\widehat{\delta}] $ and analogously to \eqref{eqbiai2del} we get 
\begin{align}
	\mu(\widehat{G}_{m})&\leq \mu\left(Q_{m}\setminus\prod_{i=1}^{d}[a_{i}+\widehat{\delta},b_{i}-\widehat{\delta}]\right)\nonumber\\
	&\leq \mu\left(\bigcup_{i=1}^{d}\left[T_{a_i+\widehat{\delta}/2,\widehat{\delta}/2}^{(i)}\cup T_{b_i-\widehat{\delta}/2,\widehat{\delta}/2}^{(i)}\right]\right)\nonumber\\
	&\leq 2dC_{3}(\widehat{\delta}/2)^{\alpha}.\label{eqbiai2del2}
\end{align}
Finally, we define 
\[
G^{\prime}_{m}:=\left\{\bm{x}\in [0,1]^{d}: \mathbbm{1}_{Q_{m}\cap \Omega}(\bm{x})\neq\widehat{\Phi}_{m}\left(\bm{x},\widetilde{\Phi}_{m}(\bm{x})\right)\right\}
\]
and obtain using \eqref{mG}, \eqref{mGtilde}, \eqref{eqbiai2del} and \eqref{eqbiai2del2}, that 
\begin{align*}
	\mu\left(G^{\prime}_{m}\right)&\leq \mu(G_{m})+\mu(\widetilde{G}_{m})+ \mu(\widehat{G}_{m})\\
	&\leq C_{2}\delta^{\gamma}+C_{2}(2\delta)^{\gamma}+2dC_{3}(\widehat{\delta}/2)^{\alpha}\\&\leq \left(3C_{2}+2dC_{3}\right)\delta^{\gamma}\\
	&\leq \left(3C_{2}+2dC_{3}\right)\left(C_{1}\sqrt{(d-1)/N}\right)^{\gamma}\\&\leq 3.5d(d-1)^{\gamma/2}C_{1}^{\gamma}N^{-\gamma/2}\max\{C_{2},C_{3}\}.
\end{align*}
We put, for $ \bm{x} \in[0,1]^{d} $
\begin{equation}
	\Phi(\bm{x}):=\sum_{m=1}^{M} \widehat{\Phi}_{m}\left(\bm{x},\widetilde{\Phi}_{m}(\bm{x})\right) \label{Phioutputlatyer}
\end{equation}
and since $ \mathbbm{1}_{\Omega}=\sum_{m=1}^{M} \mathbbm{1}_{\Omega\cap Q_{m}} $ a.e., we conclude that 
\begin{align*}
	\mu\left(\left\{\bm{x}\in[0,1]^{d}:\mathbbm{1}_{\Omega}(\bm{x})\neq \Phi(\bm{x})\right\}\right)&\leq \sum_{m=1}^{M}  \mu\left(G^{\prime}_{m}\right) \\
	&\leq 3.5Md(d-1)^{\gamma/2}C_{1}^{\gamma}N^{-\gamma/2}\max\{C_{2},C_{3}\}.
\end{align*}
Moreover, for some $ m\in \{1,\ldots,M\} $, we know that $ Q_{m}\cap Q_{j}=\emptyset $ for all $ m\neq j $. If $ \bm{x}\in Q_{m} $ is fixed\footnote{If for all $ m\in \{1,\ldots,M\} $, $ \bm{x}\notin Q_{m}  $, then $ \Phi(\bm{x})=0 $ (see \eqref{eqdeftid} and \eqref{eqphi0}).}, then for all $j\neq m$, $ \widehat{\Phi}_{j}\left(\bm{x},\widetilde{\Phi}_{j}(\bm{x})\right)=0  $ (see \eqref{eqphi0}) and 
\[
0\leq \Phi(\bm{x})= \widehat{\Phi}_{m}\left(\bm{x},\widetilde{\Phi}_{m}(\bm{x})\right)\leq \varrho(\varrho(\widetilde{\Phi}_{m}(\bm{x})))\leq \widetilde{\Phi}_{m}(\bm{x})\leq 1.
\]
This completes the first part of the theorem. 

To conclude, we obtain information about the architecture, biases, and weights of $ \Phi $. 
Note that by \eqref{eqSNN} we can define
\begin{equation}
	{\Phi}_{m}(\bm{x}^{(i)})=b^{(1)}+\sum_{i=1}^{N}w_{i}^{(1)}\varrho\left(\left\langle \bm{w}_{i}^{(0)},\bm{x}^{(i)} \right\rangle+b_{i}^{(0)}\right)\label{eqneuronsl1}
\end{equation}
for some $ w_{i}^{(1)}, b_{i}^{(0)}, b^{(1)}\in\mathbb{R} $ and $ \bm{w}_{i}^{(0)}\in\mathbb{R}^{d-1} $ for $ i=1,\ldots,N $. Also, $ x_{i}=\varrho(x_{i})-\varrho(-x_{i}) $ and 
	\begin{equation}
		\widetilde{\Phi}_{m}(\bm{x})=\frac{1}{\delta}\left(\varrho\left(\Phi_{m}(\bm{x}^{(i)})-(\varrho(x_{i})-\varrho(-x_{i}) )\right)-\varrho\left(\Phi_{m}(\bm{x}^{(i)})-(\varrho(x_{i})-\varrho(-x_{i}) )-\delta\right)\right).\label{eqneuronsl2}
	\end{equation}
Finally, $ t_{i}(u)=\frac{t_{i}^{1}-t_{i}^{2}-t_{i}^{3}+t_{i}^{4}}{\widehat{\delta}}(u) $ where 
\begin{equation}
	t_{i}^{1}(u)=\varrho(u-a_{i}),~ t_{i}^{2}(u)=\varrho(u-a_{i}-\widehat{\delta}),~ t_{i}^{3}(u)=\varrho(u-b_{i}+\widehat{\delta}),~ t_{i}^{4}(u)=\varrho(u-b_{i}),\label{eqneuronsl31}
\end{equation}
and 
	\begin{equation}
		\widehat{\Phi}_{m}\left(\bm{x},\widetilde{\Phi}_{m}(\bm{x})\right):=\begin{cases}
			0 & \text{if } \exists i: 0<b_{i}-a_{i}<2\widehat{\delta}\\
			\varrho\left(\overunderset{d}{i=1}{\sum}\left(\frac{t_{i}^{1}-t_{i}^{2}-t_{i}^{3}+t_{i}^{4}}{\widehat{\delta}}(x_{i})\right)+\varrho(\widetilde{\Phi}_{m}(\bm{x}))-d\right) & \text{otherwise.}
		\end{cases}\label{eqneuronsl32}
	\end{equation}
We count the number of neurons as follows.
\begin{itemize}
	\item \textbf{Input layer.} We have $ d $ neurons by the numbers of coordinates in the vector $ \bm{x}\in[0,1]^{d} $.

	\item \textbf{Hidden layer 1.}  In \eqref{eqneuronsl1} we are using $ N $ neurons of $ \Phi_{m} $, for \eqref{eqneuronsl2} we have to find before $ \varrho(\pm x_{i}) $ and $ \varrho(\pm(\varrho(x_{i})-\varrho(-x_{i}))) $ so we need $ 2d $ and $ 2 $ more neurons.	Then we get $ 2(d+1)+N $ for each $ i=1,\ldots,d $, but we had fixed $ m $, thus in total we obtain $ M(2(d+1)+N) $.

	\item \textbf{Hidden layer 2.} For \eqref{eqneuronsl2}, we know that
		\begin{equation}
			\varrho\left(\Phi_{m}(\bm{x}^{(i)})-(\varrho(x_{i})-\varrho(-x_{i}) )\right) ~\text{ and }~\varrho\left(\Phi_{m}(\bm{x}^{(i)})-(\varrho(x_{i})-\varrho(-x_{i}) )-\delta\right) \label{cwandbl2}
		\end{equation}
	add up to two neurons, but in addition the decomposition \eqref{eqneuronsl31} of $ t_i $ for the next layer is now done using $ 4 $ neurons for each $ i $, so we have $ 4d $ more neurons. Then, in total we get $ 2M(2d+1) $.
	
	\item \textbf{Hidden layer 3.} For \eqref{eqneuronsl32}, we see that among the possibilities $ 0 $ or $ \varrho(\cdot) $ of $ \widehat{\Phi}_{m} $, we use only one neuron, however $ m\in\{1,\ldots,M\} $ is fixed, so in total we have $ M $.

	\item \textbf{Output layer.} Here we only obtain $ \Phi(\bm{x}) $ and by its definition in \eqref{Phioutputlatyer} we use one neuron for the output.
\end{itemize}
In summary, the architecture of this NN is given by
\[
\left(d,M(2(d+1)+N),2M(d+1),M,1\right)
\]
and the number of all neurons is $ M(4(d+1)+N+1)+d+1 $. Also, an upper bound for the total weights can be obtained by summing the number of neurons of each layer when multiplied by its input dimension (which is given by the output dimension of the previous layer), plus the number of non-input neurons associated with the biases and $ M $ to account for the weights of final output layer. Then
\begin{align*}
	W(\Phi)&\leq dM(2(d+1)+N)+2M(d+1)(2(d+1)+N)+2M(d+1)\\
	&\quad+M(4(d+1)+N+1)+M+1\\
	&\leq 3 d M N + 3 M N +6 d^{2} M  +16 d M +12 M +1\\
	&\leq 41 Md^{2}N.
\end{align*}

By \eqref{ineqRSNNT2}, we use $ 7\sqrt{C_{1}}  $ to bound the number of weights and biases for $ \Phi $ in absolute value.  When there exists $ i\in \{1,\ldots,d\}: 0<b_{i}-a_{i}<2\widehat{\delta} $, then we have $ \widehat{\Phi}_{m}\left(\bm{x},\widetilde{\Phi}_{m}(\bm{x})\right)=0 $ and we can choose all first layer weights of  $ \widetilde{\Phi}_{m} $  and $ \widehat{\Phi}_{m} $ as zero. Otherwise, we can bound the weights and biases of $ \widetilde{\Phi}_{m} $ and $ \widehat{\Phi}_{m} $ with $ 7\sqrt{C_{1}} $. In both cases we can bound the weights and biases for the first layer using $ 1+7\sqrt{C_{1}} $. In the second layer by \eqref{cwandbl2} we have the bound $  1+7\sqrt{C_{1}} $ for each of the neurons, and by \eqref{eqneuronsl31} we get $ 1+ \widehat{\delta}^{-1}$, since $ |t_{i}^{j}(u)|/\widehat{\delta}\leq 1+\widehat{\delta}^{-1} $ for all $ j\in \{1,2,3,4\} $. Lastly, using \eqref{eqneuronsl32} we know that for the third layer, the weights and biases are bounded by $ \max\{\delta^{-1},\widehat{\delta}^{-1},d\} $. Then, the weights and biases of $ \Phi $ are bounded in magnitude by
\begin{align*}
	&\max\left\{1+7\sqrt{C_{1}},1+ \delta^{-\gamma/\alpha},\max\{\delta^{-1},\delta^{-\gamma/\alpha},d\}\right\}\\
	&\leq\max\left\{(1+7\sqrt{C_{1}}),1+ \delta^{-\gamma/\alpha},\delta^{-1}\right\} \\
	&\leq (1+\sqrt{C_{1}})\left(7+ \delta^{-\gamma/\alpha}+\delta^{-1}\right)\\
	&\leq (1+\sqrt{C_{1}})\left(7+N/C_{1}+(N/C_{1})^{\gamma/\alpha}\right),
\end{align*}
since $ \delta^{-1}=\left(C_{1}\sqrt{(d-1)/N}\right)^{-1}=(N/(d-1))^{1/2}/C_{1}\leq N/C_{1} $.
\qed

\subsection{Proof of Theorem~\ref{secondproxtheo}}

First, we show the following result which is a consequence of \cite[Theorem A.1]{fastc}.

\begin{lem}\label{theoA1mod} Assume that $ C^{*} $ is the Bayes classifier as in \eqref{notatloss}, and:
	\begin{enumerate}[label=$ (\roman*)$]
		\item \label{item2} For a positive sequence $ \{a_{n}\}_{n\in \mathbb{N} } $, there exists a sequence of function classes $ \{\mathcal{F}_{n}\}_{n\in\mathbb{N}} $ such that 
		\[
		\mathcal{E}_{\phi}(f_{n},C^{*})\leq a_{n}
		\]
		for some $ f_{n}\in\mathcal{F}_{n}\subset \mathcal{F} $, where $\mathcal{F} $ is as in $ \eqref{defF} $.
		
		\item \label{item 4} Let $ \bm{x}\in [0,1]^{d} $ and $ y\in \{0,1\} $. Then, for all $ f_{n}\in \mathcal{F}_{n} $ and  $ n\in \mathbb{N} $,
		\[
		\mathbb{E}_{\bm{x}}\left[\left(\phi(2f_{n}(\bm{x})-1,2y-1)-\phi(2C^{*}(\bm{x})-1,2y-1)\right)^{2}\right]\leq c_{2} \mathcal{E}_{\phi}(f_{n},C^{*})
		\]
		for a constant $ c_{2}>0 $ depending only on $ \eta $ and $ \phi $, where $ \eta $ and $ \phi $ are defined in \eqref{posterandomeg} and \eqref{hinge01}.
		
		\item \label{item5} There exist $ \hat{c}_{3}>0 $ and $ \{\hat{\delta}_{n}\}_{n\in\mathbb{N}}\subset \mathbb{R}^{+} $ such that  
		\[
		V_{[0,1]^{d},\norm{\cdot}_{n}}(\hat{\delta}_{n})\leq \hat{c}_{3}n\hat{\delta}_{n},\quad \text{for all}\quad n\in\mathbb{N}.
		\]		
	\end{enumerate}
	Then with these assumptions and taking $ \epsilon^{2}:=2\max\{a_{n},2^{8}\hat{\delta}_{n}\} $, it holds for some universal constant $ c>0 $, that 	 
	\begin{equation}
		\mu\left(\mathcal{E}_{\phi}(\widehat{f}_{\phi,S},C^{*})\geq\epsilon_{n}^{2}\right)\lesssim \exp\left(-cn\epsilon_{n}^{2}\right).\label{ineqteoA1mod}
	\end{equation}
	
\end{lem}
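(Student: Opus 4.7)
The strategy is to invoke \cite[Theorem A.1]{fastc} essentially as a black box, after checking that our hypotheses \ref{item2}, \ref{item 4}, \ref{item5} match (up to cosmetic relabelling) the three standard inputs required by that theorem: an approximation bound, a Bernstein-type variance inequality, and an empirical covering entropy bound. The final choice $\epsilon_n^2 := 2\max\{a_n,\,2^{8}\hat\delta_n\}$ is precisely designed so that it dominates both the bias term $a_n$ coming from approximation and the complexity term proportional to $\hat\delta_n$ coming from the entropy, which is the usual balance for fast-rate ERM analyses.

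First, I would recall the structure of the proof of \cite[Theorem A.1]{fastc}. One decomposes the excess $\phi$-risk of $\widehat f_{\phi,S}$ as
\[
\mathcal{E}_{\phi}(\widehat f_{\phi,S},C^{*})
\;\leq\;
\bigl[\mathcal{E}_{\phi}(\widehat f_{\phi,S},C^{*}) - \widehat{\mathcal{E}}_{\phi,S}(\widehat f_{\phi,S},C^{*})\bigr]
+ \widehat{\mathcal{E}}_{\phi,S}(f_{n},C^{*}) + \mathcal{E}_{\phi}(f_{n},C^{*}),
\]
where $f_{n}\in\mathcal{F}_{n}$ is the approximant from hypothesis \ref{item2}. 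Hypothesis \ref{item2} controls the last term by $a_n$. The middle term is handled by Bernstein's inequality at a single function, again using \ref{item 4} as the variance proxy. The first, purely stochastic, term is the delicate one, and is bounded uniformly in $\mathcal{F}_n$ by a peeling/chaining argument relying on the variance control \ref{item 4} and the empirical entropy control \ref{item5}.

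Second, the key technical step is the peeling over the excess risk: one partitions $\mathcal{F}_n$ into slices on which $\mathcal{E}_{\phi}(f,C^{*})\in[2^{k}\epsilon_n^{2},2^{k+1}\epsilon_n^{2})$ and, on each slice, uses hypothesis \ref{item 4} to bound the variance by $c_2\,2^{k+1}\epsilon_n^{2}$. A Talagrand-type concentration inequality (or the classical Bousquet version) combined with symmetrization then reduces the supremum of the empirical process to a Rademacher complexity, which is controlled via Dudley's entropy integral together with \ref{item5}. The entropy bound $V_{[0,1]^d,\|\cdot\|_n}(\hat\delta_n)\leq \hat c_3 n\hat\delta_n$ yields a complexity contribution of order $\hat\delta_n$, and the prefactor $2^{8}$ in the definition of $\epsilon_n^{2}$ is exactly what is needed to absorb the universal constants from Talagrand's inequality and from the entropy integral. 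Summing the union bound over the peeling slices $k\geq 0$ produces the geometric series $\sum_k \exp(-c\,n\,2^{k}\epsilon_n^{2})\lesssim \exp(-c\,n\epsilon_n^{2})$, giving \eqref{ineqteoA1mod}.

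The main obstacle is simply to verify that the hypotheses of \cite[Theorem A.1]{fastc} are met verbatim under our renormalized conventions. In particular, \cite{fastc} works with classifiers taking values in $\{-1,+1\}$ and the hinge loss defined on $\mathbb{R}\times\{\pm 1\}$, whereas our $\phi$ in \eqref{hinge01} is defined on $[0,1]\times\{0,1\}$ via the affine substitutions $2f(\bm{x})-1$ and $2y-1$; this is the reason these factors appear explicitly inside hypothesis \ref{item 4}. A brief computation confirms that the substitution is an isometry of the relevant loss differences, so that the variance inequality, boundedness, and the empirical entropy in $\|\cdot\|_n$ transfer directly. Once these correspondences are verified, the conclusion \eqref{ineqteoA1mod} follows from the cited theorem with the constant $c>0$ being universal (independent of $n$, $d$, $\mathcal{F}_n$), as claimed.
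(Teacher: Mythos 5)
Your proposal is correct and follows essentially the same route as the paper: both reduce the lemma to \cite[Theorem A.1]{fastc} by verifying its conditions (A1)--(A5) under the relabelling $y\mapsto 2y-1$, $f\mapsto 2f-1$, with your hypotheses (i), (ii), (iii) supplying the approximation, variance, and entropy conditions respectively. The only point the paper treats more explicitly than you do is the entropy bookkeeping --- passing from the $\delta_n$-bracketing entropy required by (A5) to the sup-norm covering entropy of hypothesis (iii) via $\hat\delta_n=\delta_n/4$ and $\hat c_3=4c_3$ --- but this is a matter of detail, not of approach.
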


\begin{proof}
	Note that in \cite{fastc} the labels are in the set $ \{\pm 1\} $, however, in this result we take them in $ \{0,1\} $, for this reason in \eqref{notatloss0} and \eqref{hinge01} the mapping $ 2x-1 $ from $ [0,1] $ to $ [-1,1] $ is used. Moreover, the functions $ f\in\mathcal{F} $ here are understood to have the form $ 2f-1 $ in \cite{fastc}. Now, we see that conditions from (A1) to (A5) in \cite{fastc} are satisfied as follows. \\
	
		\noindent\textbullet~ By the definition of hinge loss we know that (A1) is true since $ \phi $ is Lipschitz with constant $ c_1=1 $.\\
		
		\noindent\textbullet~  Condition (A2) turns into \ref{item2} when the labels are changed from $ \{\pm1\} $ to $ \{0,1\} $.\\
		
		\noindent\textbullet~  We know that  $ \sup_{f\in \mathcal{F}_{n}} {\norm{f}}_{\infty} \leq 1 $ with $ \mathcal{F}_{n} $ as in \ref{item2}, but also $ \sup_{f\in \mathcal{F}_{n}} {\norm{2f-1}}_{\infty} \leq 1. $  Then, in the condition (A3) we can fix $ \{F_{n}\}_{n\in \mathbb{N}} $ where $ F_{n}\gtrsim 1 $, as $ F_{n}:=1 $, for all $  n\in \mathbb{N} $.\\
		
		\noindent\textbullet~  In (A4) if we set $ v=1 $, as in the previous item we fixed $ F_n=1 $, then \ref{item 4} becomes condition (A4).\\
		
		\noindent\textbullet~  In this case, condition (A5) states the following:  There exists a sequence $ \{\delta_{n}\}_{n\in \mathbb{N}} $ such that 
		\begin{equation}\label{ineqentropy1}
			H_{B}:=H_{B}(\delta_{n}, 2\mathcal{F}_{n}-1, \norm{\cdot}_{2})\leq c_{3}n\left(\frac{\delta_{n}}{F_{n}}\right)^{2-v}=c_{3}n\delta_{n},
		\end{equation}
		for some constant $ c_{3}>0 $, with $ F_{n}=1 $ and $ v=1 $ as in the two items above. Here, to avoid unnecessary details, $ H_{B} $ is called the $ \delta_{n} $-bracketing entropy and is bounded by the inequality \cite[(A.1)]{fastc}, that is 
		\begin{equation}\label{ineqentropy2}
			H_{B}\leq \log \mathcal{N}\left(\delta_{n}/2,2\mathcal{F}_{n}-1,\norm{\cdot}_{\infty}\right):=\log\inf\mathcal{G}
		\end{equation}
		where
		\[
		\mathcal{G}:=\left\{N\in\mathbb{N}:\text{ exist } f_{1},\ldots,f_{N}\text{ such that } 2\mathcal{F}_{n}-1\subset \bigcup_{i=1}^{N}B_{\infty}(f_{i},\delta_{n})\right\}
		\]
		and $B_{\infty}(f_{i},\delta_{n}):=\left\{f\in 2\mathcal{F}_{n}-1:{\norm{f-f_{i}}}_{\infty}\leq \delta_{n}\right\}  $. Moreover, if we set $ K:=[-1,1]^{d} $ and $  G_{\delta_{n}}:=\bigcup_{i=1}^{N}B_{\infty}(f_{i},\delta_{n}) $ in \eqref{defentropy}, we know by \ref{item5} that 
		\begin{align}\label{ineqentropy3}
			\log \mathcal{N}\left(\delta_{n}/2,2\mathcal{F}_{n}-1,\norm{\cdot}_{\infty}\right)&\leq V_{[-1,1]^{d},\norm{\cdot}_{\infty}}(\delta_{n}/2)\nonumber\\
			&\leq V_{[0,1]^{d},\norm{\cdot}_{\infty}}(\delta_{n}/4)\nonumber\\ 
			&\leq \hat{c}_{3}n(\delta_{n}/4),
		\end{align}
		where $ \hat{\delta}_{n}:=\delta_{n}/4 $. Then, taking $ \hat{c}_{3}:= 4c_{3}  $ we obtain from inequalities \eqref{ineqentropy2} and \eqref{ineqentropy3} that inequality \eqref{ineqentropy1} is satisfied, that is, condition (A5) is fulfilled. 		 
		
	In conclusion, condition (A1)-(A5) are met and we fix $ c_{3}:=\left(2^{17}\max\left\{5c_{2},128\right\}\right)^{-1} $ and $ \epsilon^{2}:=2\max\{a_{n},2^{8}\hat{\delta}_{n}\} $. Then, Theorem A.1 in \cite{fastc} implies inequality \eqref{ineqteoA1mod}.	
	
\end{proof}

We now continue with the proof of Theorem~\ref{secondproxtheo}. By definition we can choose $ C^{*}:=\mathbbm{1}_{\Omega} $, therefore $ \mathcal{E}(C^{*})=0 $,
\begin{equation}\label{phiCstar0}
	\phi(2C^{*}(\bm{x})-1,2\mathbbm{1}_{\Omega}(\bm{x})-1)=\max\{0,1-(2\mathbbm{1}_{\Omega}(\bm{x})-1)^{2}\}=0
\end{equation}	
and $ \mathcal{E}_{\phi}(C^{*})=0 $. So, 
\begin{equation}
	\mathcal{E}_{\phi}(f,C^{*})=\mathcal{E}_{\phi}(f)  \quad\text{and}\quad  \mathcal{E}(f,C^{*})=\mathcal{E}(f), \quad\text{for any }\quad f\in \mathcal{F} .\label{equalsphi01risk} 
\end{equation}
Now, we show that all hypotheses of Lemma \ref{theoA1mod} are satisfied.

 Let $ \mathcal{F}_{n}:=\mathcal{NN}_{*}(d,N_{n},W_{n},B_{n}) $, then\\

	\noindent\textbullet~ By Theorem \ref{firstaproxtheo} we know that for all $ n\in\mathbb{N} $, there exists $ f_{n}\in \mathcal{F}_{n} $ such that
	\begin{small}
			\begin{align*}
			\mu\left(\left\{\bm{x}\in[0,1]^{d}:\mathbbm{1}_{\Omega}(\bm{x})\neq f_{n}(\bm{x})\right\}\right)\leq 3.5Md(d-1)^{\gamma/2}C_{1}^{\gamma}{\widehat{N}_{n}}^{-\gamma/2}\max\{C_{2},C_{3}\}\leq n^{-\gamma/(2+\gamma)}/2,
		\end{align*}
	\end{small}
	and this implies 
	\begingroup
	\allowdisplaybreaks
	\begin{align*}
		\mathcal{E}_{\phi}(f_{n})&=\mathbb{E}_{\bm{x}}\left[\phi(2f_{n}(\bm{x})-1,2\mathbbm{1}_{\Omega}-1)\right]
		\\&=\int_{[0,1]^{d}} \max\{0,1-(2f_{n}(\bm{x})-1)(2\mathbbm{1}_{\Omega}(\bm{x})-1)\} \,d\mu\\
		&=\int_{\{\mathbbm{1}_{\Omega}(\bm{x})\neq f_{n}(\bm{x})\}} \max\{0,1-(2f_{n}(\bm{x})-1)(2\mathbbm{1}_{\Omega}(\bm{x})-1)\} \,d\mu \\
		&\leq \int_{[0,1]^{d}} 2\mathbbm{1}_{\mathbbm{1}_{\Omega}(\bm{x})\neq f_{n}(\bm{x})} \,d\mu = 2\mu\left(\left\{\bm{x}\in[0,1]^{d}:\mathbbm{1}_{\Omega}(\bm{x})\neq f_{n}(\bm{x})\right\}\right)\\
		&\leq  n^{-\gamma/(2+\gamma)},
	\end{align*}
	\endgroup
	i.e, condition \ref{item2} is fulfilled with $ a_{n}:= n^{-\gamma/(2+\gamma)} $.\\

	\noindent\textbullet~ Let $ G=\left\{\bm{x}\in [0,1]^{d}:\phi(2f_{n}(\bm{x})-1,2\mathbbm{1}_{\Omega}(\bm{x})-1)\leq 1\right\} $. Using identity \eqref{phiCstar0}, we have 
	\begin{align}
		&\mathbb{E}_{\bm{x}}\left[\left(\phi(2f_{n}(\bm{x})-1,2\mathbbm{1}_{\Omega}(\bm{x})-1)-\phi(2C^{*}(\bm{x})-1,2\mathbbm{1}_{\Omega}(\bm{x})-1)\right)^{2}\right]\nonumber\\
		&=\mathbb{E}_{\bm{x}}\left[\left(\phi(2f_{n}(\bm{x})-1,2\mathbbm{1}_{\Omega}(\bm{x})-1)\right)^{2}\right]\\&=\int_{\mathbb{R}^{d}} \left(\phi(2f_{n}(\bm{x})-1,2\mathbbm{1}_{\Omega}(\bm{x})-1)\right)^{2} d\mu\nonumber\\
		&\leq\int_{G} \phi(2f_{n}(\bm{x})-1,2\mathbbm{1}_{\Omega}(\bm{x})-1) d\mu+\int_{G^{c}} 4 d\mu\nonumber\\ &\leq \mathbb{E}_{\bm{x}}\left[\phi(2f_{n}(\bm{x})-1,2\mathbbm{1}_{\Omega}(\bm{x})-1)\right]+ 4\mu(G^{c})\nonumber\\
		&\leq \mathcal{E}_{\phi}(f_{n})+ 4\mathbb{E}_{\bm{x}}\left[C_{f_{n}}(\bm{x})\neq y\right]\nonumber\\
		&\leq \mathcal{E}_{\phi}(f_{n})+4\mathcal{E}(f_{n})\label{ineqforitem4}
	\end{align}
	since $ |\phi(2f_{n}(\bm{x})-1,2\mathbbm{1}_{\Omega}(\bm{x})-1)|\leq |1-(2f_{n}(\bm{x})-1)(2\mathbbm{1}_{\Omega}(\bm{x})-1)|\leq 2 $ and $ x\in G^{c} $ implies 
	\[
	\phi(2f_{n}(\bm{x})-1,2\mathbbm{1}_{\Omega}(\bm{x})-1)>1,\quad\text{ that is }\quad (2f_{n}(\bm{x})-1)(2\mathbbm{1}_{\Omega}(\bm{x})-1)<0,
	\]
	i.e. $ C_{f_{n}}(\bm{x})\neq y $. Then, by inequalities \eqref{ineqforhinge} and \eqref{ineqforitem4},
	\[
	\mathbb{E}_{\bm{x}}\left[\left(\phi(2f_{n}(\bm{x})-1,2\mathbbm{1}_{\Omega}(\bm{x}) -1)\right)^{2}\right]\leq \mathcal{E}_{\phi}(f_{n})+4\mathcal{E}(f_{n})\leq (1+4C_{\phi})\mathcal{E}_{\phi}(f_{n})
	\]
	and condition \ref{item 4} with $ c_{2}:=1+4C_{\phi} $ is satisfied.\\
	
	\noindent\textbullet~ Using Lemma \ref{lemaentropy} with $ \delta=\hat{\delta}_{n}:=a_n (1+\log n)$ we obtain
	\begin{align*}
		V_{[0,1]^{d},\norm{\cdot}_{\infty}}(\hat{\delta}_{n})&\leq W_{n}\cdot\left(10+\log(1/\hat{\delta}_{n})+5\log(\lceil B_{n} \rceil)+5\log(\max\{d,W_{n}\})\right)\\
		&\lesssim  \widehat{N}_{n} (1+\log n)\\
		& \lesssim  n\cdot n^{2/(2+\gamma)-1}(1+\log n)\\
		&=n\hat{\delta}_{n},\quad \text{for all}\quad n\in\mathbb{N}.
	\end{align*}
	So, there exists $ \hat{c}_{3}>0 $ such that
	\[
	V_{[0,1]^{d},\norm{\cdot}_{n}}(\hat{\delta}_{n})\leq \hat{c}_{3}n\hat{\delta}_{n},\quad \text{for all}\quad n\in\mathbb{N}
	\]
	and condition \ref{item5}  is fulfilled.

Then, we use  Lemma \ref{theoA1mod} with
\[
\epsilon_{n}^{2}=2\max\{a_{n},2^{8}a_{n}(1+\log n)\}=2^{9}n^{-\gamma/(2+\gamma)}(1+\log n)
\]
and we get
\begin{align}
	\mu\left(\mathcal{E}_{\phi}(\widehat{f}_{\phi,S})\gtrsim n^{-\gamma/(2+\gamma)}(1+\log n)  \right)&\leq \mu\left(\mathcal{E}_{\phi}(\widehat{f}_{\phi,S})\geq \epsilon_{n}^{2}  \right)\nonumber\\&\lesssim \exp\left(-n^{1-\gamma/(2+\gamma)}(1+\log n)\right)\nonumber\\
	&\lesssim n^{-1}.\label{ineqforfinalteoaprox2}
\end{align}
Finally, by inequalities \eqref{ineqforhinge}, \eqref{equalsphi01risk} and \eqref{ineqforfinalteoaprox2} we conclude \eqref{conclusiontaprox2}.

\qed

\end{document}